\documentclass{article}

\usepackage{microtype}
\usepackage{graphicx}
\usepackage{subcaption}
\usepackage{booktabs} 
\usepackage{algorithm}
\usepackage{float}
\usepackage{bm}
\usepackage{makecell}

\newcommand{\R}{\mathbb{R}}

\newcommand{\E}{\mathbb{E}}

\newcommand{\etph}{\texttt{ETPH}}
\newcommand{\nlt}{N_{\texttt{large}}}

\usepackage{hyperref}

\usepackage[preprint]{icml2026}

\usepackage{amsmath}
\usepackage{amssymb}
\usepackage{mathtools}
\usepackage{amsthm}

\usepackage[capitalize,noabbrev]{cleveref}

\theoremstyle{plain}
\newtheorem{theorem}{Theorem}[section]

\newtheorem{lemma}{Lemma}[section]

\theoremstyle{definition}
\newtheorem{definition}{Definition}

\theoremstyle{remark}

\usepackage[textsize=tiny]{todonotes}

\begin{document}

\twocolumn[
  \icmltitle{Multi-Objective Reinforcement Learning for Large-Scale Tote Allocation in Human-Robot Collaborative Fulfillment Centers}



  \icmlsetsymbol{equal}{*}

  \begin{icmlauthorlist}
    \icmlauthor{Sikata Sengupta}{yyy,equal}
    \icmlauthor{Guangyi Liu}{comp,equal}
    \icmlauthor{Omer Gottesman}{comp}\\
    \icmlauthor{Joseph W Durham}{comp}
    \icmlauthor{Michael Kearns}{yyy,comp}
    \icmlauthor{Aaron Roth}{yyy,comp}
    \icmlauthor{Michael Caldara}{comp}
  \end{icmlauthorlist}
  
  \icmlaffiliation{yyy}{Department of Computer and Information Science, University of Pennsylvania, Philadelphia, PA, USA}
  \icmlaffiliation{comp}{Amazon}

  \icmlcorrespondingauthor{Sikata Sengupta}{sikata@seas.upenn.edu}

  \icmlkeywords{Machine Learning, ICML}

  \vskip 0.3in
]



\printAffiliationsAndNotice{\icmlEqualContribution}

\begin{abstract}
 Optimizing the consolidation process in container-based fulfillment centers requires trading off competing objectives such as processing speed, resource usage, and space utilization while adhering to a range of real-world operational constraints. This process involves moving items between containers via a combination of human and robotic workstations to free up space for inbound inventory and increase container utilization. We formulate this problem as a large-scale Multi-Objective Reinforcement Learning (MORL) task with high-dimensional state spaces and dynamic system behavior. Our method builds on recent theoretical advances in solving constrained RL problems via best-response and no-regret dynamics in zero-sum games, enabling principled minimax policy learning. Policy evaluation on realistic warehouse simulations shows that our approach effectively trades off objectives, and we empirically observe that it learns a single policy that simultaneously satisfies all constraints, even if this is not theoretically guaranteed. We further introduce a theoretical framework to handle the problem of error cancellation, where time-averaged solutions display oscillatory behavior. This method returns a single iterate whose Lagrangian value is close to the minimax value of the game. These results demonstrate the promise of MORL in solving complex, high-impact decision-making problems in large-scale industrial systems.
\end{abstract}

\section{Introduction and Background}

Modern warehouses (\Cref{fig:consolidation_stations}, left) increasingly rely on human-robot collaboration to manage large-scale inventory operations. For example, Amazon's robotic fulfillment system, ``Sequoia,'' leverages these collaborations to store inventory $75\%$ faster and process customer orders $25\%$ faster than its prior generation fulfillment system through the integration of multiple robotic products used to containerize inventory and present it at an ergonomic height for processing at an employee workstation. Key to unlocking these speed-ups is efficient resource management within the storage system. In this work, we focus on a particular space management process called ``consolidation,'' wherein items are moved between containers to improve utilization (\Cref{fig:consolidation_stations}, right) and free up space for new inbound items. Items are stored in containers, known as ``totes'', which are cyclically inbounded into storage shelves, consolidated to reclaim space, and picked from to fulfill customer orders. Efficient tote selection for consolidation is thus central to optimizing both storage utilization and throughput in this environment, but requires careful management of multiple operational objectives.

For illustration purposes, we assume the following stylized consolidation process: (1) totes can be processed by both humans and robots, each with different manipulation capabilities. We assume that humans are capable of safely manipulating any item, whereas robots are only capable of safely manipulating some items. The task of both operators is to transfer items from a ``source'' tote to one or more ``destination'' totes. Source totes are ejected when emptied, and destination totes are ejected when full. A range of factors influence whether a tote is better suited for a human or robot to consolidate including the particular items in the source tote, the destination tote fullness, and the tote properties (e.g., we assume two types of totes ``smaller'' and ``larger'' of differing dimensions). Employee workstations typically handle a broader range of item types, but use up processing capacity from other workflow, whereas robotic stations excel at consistency but may be restricted by item properties. As such, consolidation decisions must not only maximize throughput and space utilization, but also \textit{intelligently allocate workloads between humans and robots} to leverage their respective strengths. Operational goals for consolidation are similarly multi-faceted and often conflicting. Key performance indicators (KPIs) include throughput efficiency, inventory tote type balance, and space utilization. Optimizing across these dimensions requires managing complex trade offs under dynamic conditions and real-world constraints.

Due to the complexity of this decision, we expect that heuristics and single-objective optimization strategies will fail to generalize or scale beyond some small number of trade offs. A common approach to handle such scenarios is \textit{scalarization}, where the different objectives are combined into a single objective function using predefined weights. However, in one-shot settings, these weights must be fixed in advance, making the approach sensitive to weight selection and ill-suited for environments with shifting priorities. Moreover, they typically optimize for one KPI at the expense of others, resulting in suboptimal system-wide performance. In contrast, our work avoids the need for manual weight specification by employing an adaptive procedure that iteratively adjusts the objective weights during training. This enables the learning process to naturally converge toward a minimax solution that balances objectives under dynamic and uncertain conditions, as we elaborate later in the paper.

In this work, we address this problem by framing the tote consolidation and allocation problem as a large-scale \textit{Multi-Objective Reinforcement Learning} (MORL) task. Our formulation captures the dynamic and stochastic nature of warehouse operations while explicitly modeling the trade offs between conflicting KPIs and the heterogeneous capabilities of human and robotic stations. Building on recent theoretical advances in constrained RL, particularly best-response and no-regret learning in zero-sum games, we develop a principled and scalable approach for minimax policy learning in complex multi-objective settings.  

Our contributions are threefold: (i) we propose a novel MORL formulation for real‑world consolidation problems in human–robot collaborative fulfillment centers, explicitly modeling heterogeneous station capabilities; (ii) we develop a theoretical framework that reformulates the multi‑objective problem as a zero‑sum Lagrangian game and prove that we can select a single iterate from the time‑averaged approximate minimax mixture whose Lagrangian value is close to the minimax value of the game; and (iii) we demonstrate strong empirical performance on realistic warehouse simulations, where our approach outperforms baselines across KPIs. These results provide compelling evidence that MORL is a viable and impactful approach to optimizing high‑dimensional, high‑stakes industrial decision making systems.

\begin{figure}[t]
  \vskip 0.2in
  \begin{center}
    \begin{minipage}{\linewidth} 
        \centering
        \begin{minipage}[b]{0.41\linewidth}
            \centering
            \includegraphics[width=\linewidth]{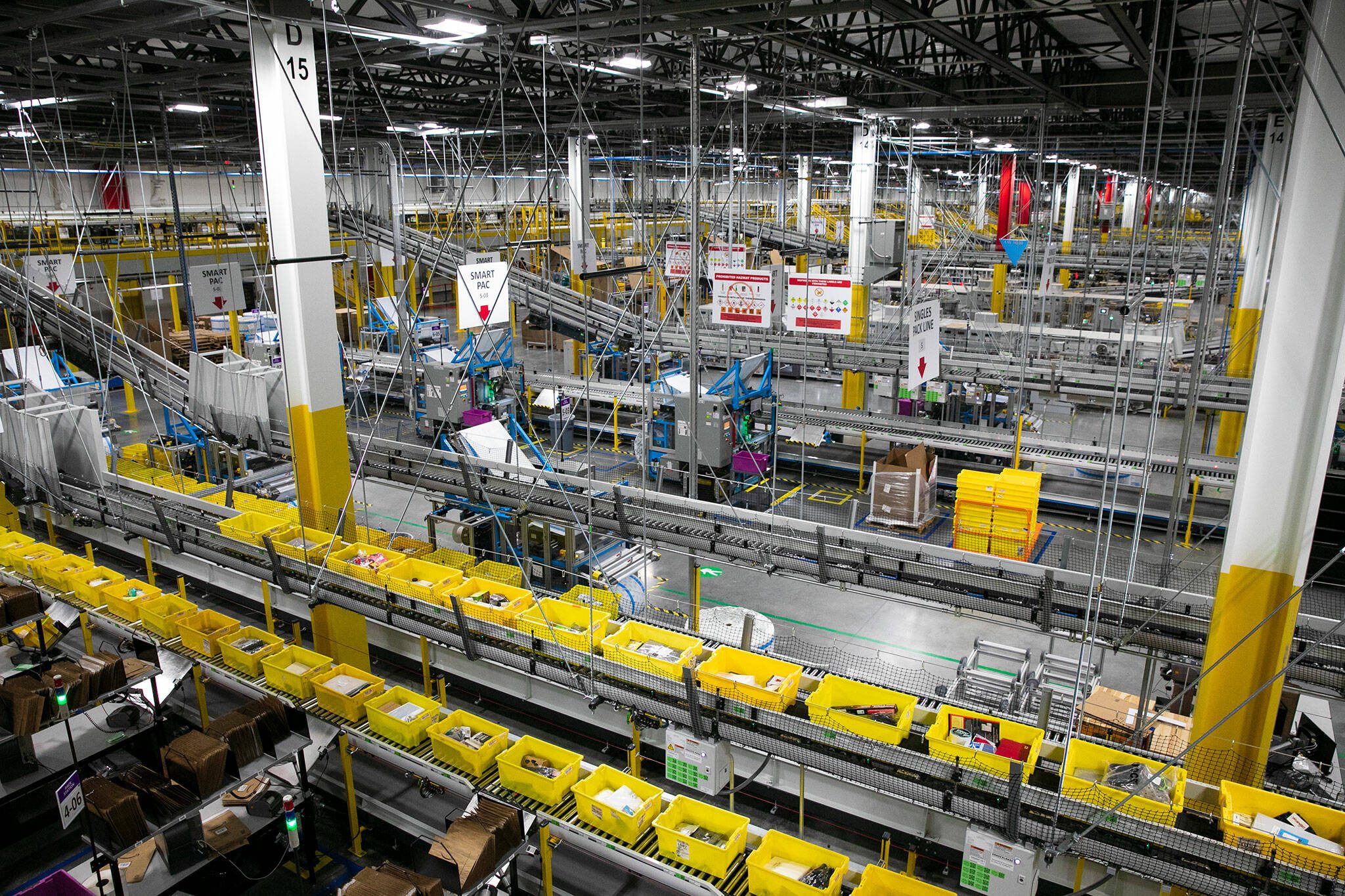}
        \end{minipage}
        \hfill
        \begin{minipage}[b]{0.48\linewidth}
            \centering
            \includegraphics[width=\linewidth]{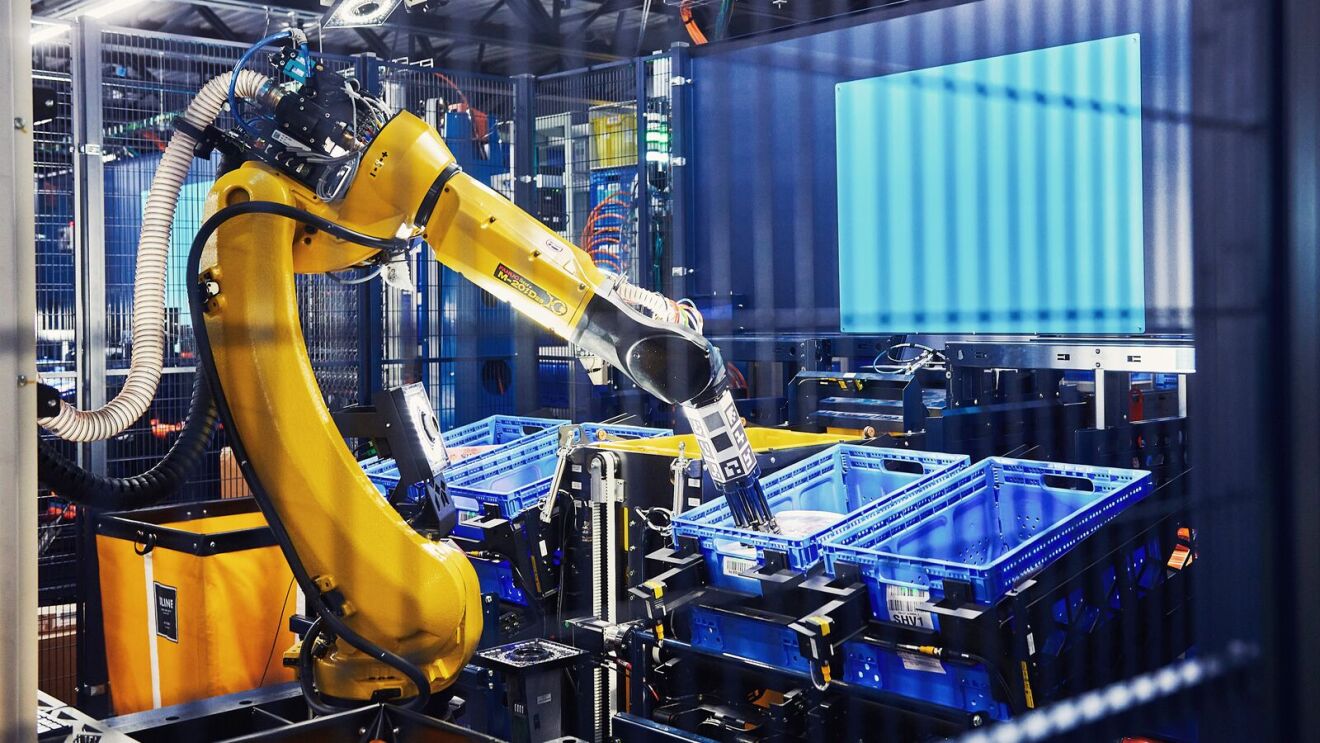}
        \end{minipage}
    \end{minipage}
    \caption{
      Left: High-throughput container-based fulfillment center with dense human-robot collaboration. Right: Robotic consolidation station autonomously transferring items between totes.
    }
    \label{fig:consolidation_stations}
  \end{center}
\end{figure}

\section{Related Work}
We organize our related work mainly into two categories--theoretical works related to multi-objective or constrained RL, and empirical works related to this domain of large-scale industrial applications like inventory management. 
 
\citet{hayes2022practical} provides a comprehensive overview of  general multi-objective RL approaches that have been explored. As noted in \cite{yang2019generalized}, MORL methods can often be divided into two main categories: single-policy and multiple-policy methods. Single policy methods often require knowledge of preferences in advance, so in this paper we take a multiple policy approach. Constrained RL problems can be modeled as a multi-objective problem with constraint violation terms written as additional objectives. Constrained RL is of interest in many domains ranging from safety to fairness, where one wants to learn a policy that optimizes some global reward subject to various trustworthiness constraints. \cite{calvo}  utilize primal-dual methods (via optimizing single objective RL for the learner) and utilizing dual descent for the regulator. Especially for zero-sum games, one can relate game-theoretic notions of best-response and no-regret to primal-dual style algorithms. \cite{miryoosefi2019reinforcement} use best-response vs. no-regret dynamics with convex constraints and provide safety guarantees on the time-averaged strategies of each player. \cite{eaton2025intersectionalfairnessreinforcementlearning} provide oracle-efficient algorithms similarly utilizing best-response vs. no-regret dynamics to solve for a minimax solution of the Lagrangian of this problem even when the number of objectives (groups in the context of fairness) is exponentially large for both tabular and large state space MDPs. We choose to utilize this repeated game framework because it does not require us to know preferences over objectives in advance and because it does not make strong assumptions on the structure/geometry of multi-objective problem.

On the application side, \cite{el2023multi} consider a Multi-Objective RL approach to sustainable supply-chain optimization. They learn policies to try to approximate the Pareto Frontier. They make use of an outer-inner loop procedure where the outer loop makes use of evolutionary strategy updates and the inner loop uses Envelope Multi-Objective Q-Learning \cite{yang2019generalized}.

\section{Problem Setting and Model} \label{sec:problem_setting_and_model}

\begin{figure}[t]
  \vskip 0.2in
  \begin{center}
    \centerline{\includegraphics[width=\columnwidth]{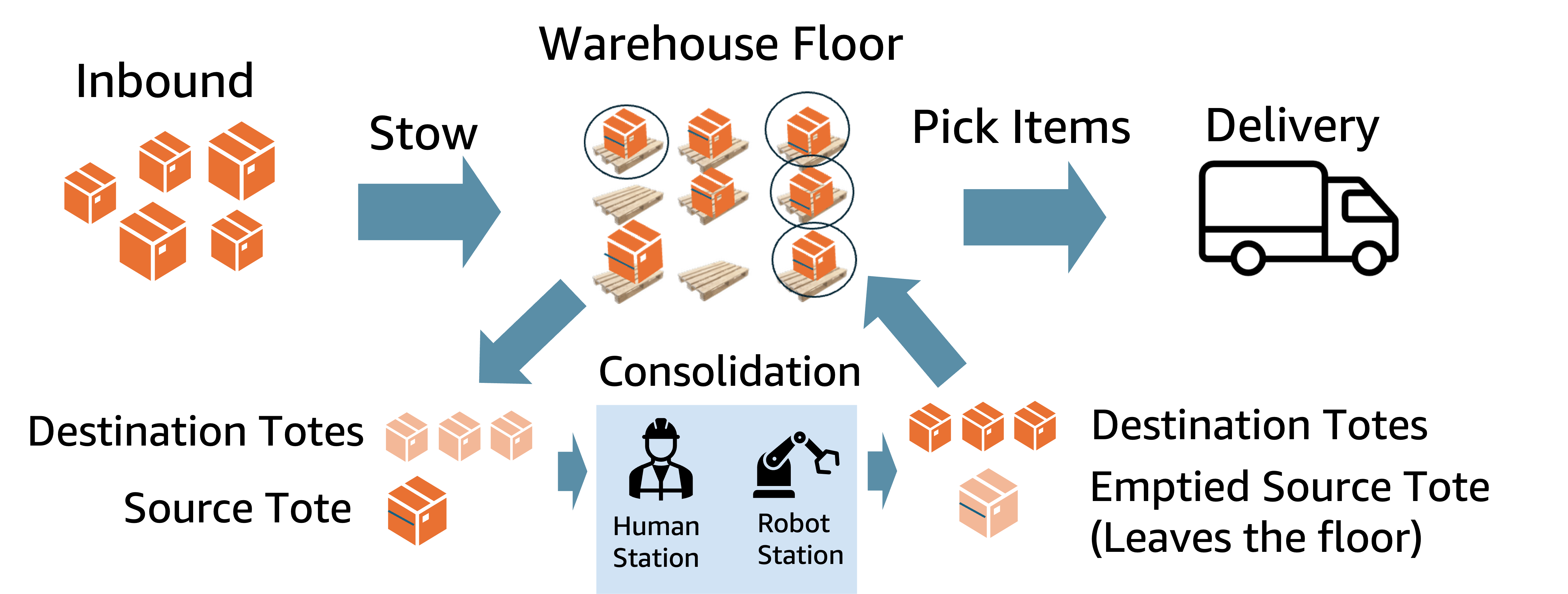}}
    \caption{Human-robot collaborative fulfillment center workflow.}
    \label{fig:overview}
  \end{center}
\end{figure}

We study the decision-making problem underlying tote consolidation in human-robot collaborative fulfillment centers, illustrated in \Cref{fig:overview}. Consolidation operates alongside inbound and pick processes and involves transferring items from partially filled source totes into destination totes to reclaim storage capacity. Employee and robot stations differ in their manipulation capabilities: employee stations can handle complex or irregular items, while robot stations are limited in their reach, perception, and handling of deformable and reflective products. Effective consolidation therefore requires allocating work across heterogeneous stations while respecting operational constraints.

At each decision point, the system selects a source tote and a compatible destination tote based on tote-level attributes such as fill level and item complexity, as well as floor-level signals, such as the current distribution of tote types. The objective is to balance multiple, often competing, KPIs, including maintaining high overall processing efficiency, making effective use of available empty capacity during consolidation, and preserving a healthy distribution of tote types due to the warehouse configuration. We model this as a multi-objective sequential decision-making problem and address it using constrained multi-objective reinforcement learning (MORL), which enables the learning of station-aware tote prioritization policies that adapt to dynamic warehouse conditions.

\subsection{Preliminaries}  
We consider an episodic RL setting modeled as a Markov Decision Process (MDP) 
$\mathcal{M} = \big(\mathcal{X}, \mathcal{A}, \{P_h\}_{h=0}^{H-1}, \{r_{i,h}\}_{i=0,\,h=0}^{m,\,H-1}, \mu, H \big)$, where $\mathcal{X}$ is the state space, $\mathcal{A}$ is the action space,  
$P_h: \mathcal{X} \times \mathcal{A} \rightarrow \Delta(\mathcal{X})$ is the transition function at step $h$,  
$r_{i,h}: \mathcal{X} \times \mathcal{A} \rightarrow [0, R]$ is the reward function for objective $i \in \{0, \dots, m\}$ at step $h$,  
$\mu$ is the initial-state distribution, and $H$ is the episode horizon. Here, $\Delta(\mathcal{X})$ denotes the probability simplex over $\mathcal{X}$. A non-stationary policy is a sequence $\pi = \{\pi_h\}_{h=0}^{H-1}$,  
where each $\pi_h: \mathcal{X} \rightarrow \Delta(\mathcal{A})$ maps states to distributions over actions.  
We use $[H] := \{0, \dots, H-1\}$ to denote the set of time indices.  
Given a policy $\pi$, the trajectory $(\mathbf{x}_0, \mathbf{a}_0, \dots, \mathbf{x}_{H-1}, \mathbf{a}_{H-1})$ is generated by  
$\mathbf{x}_0 \sim \mu$, $\mathbf{a}_h \sim \pi_h(\mathbf{x}_h)$, and $\mathbf{x}_{h+1} \sim P_h(\cdot \mid \mathbf{x}_h, \mathbf{a}_h)$ for all $h \in [H-1]$. For objective $i$, the value function under policy $\pi$ is
\[
V_{i,h}^\pi(x) \;=\; \mathbb{E}^\pi \left[ \sum_{t=h}^{H-1} r_{i,t}(\mathbf{x}_t, \pi_t(\mathbf{x}_t)) \,\bigg|\, \mathbf{x}_h = x \right].
\]

We cast our setting as a constrained RL problem \cite{paternain2019constrained,eaton2025intersectionalfairnessreinforcementlearning} of the form
\begin{equation}    \label{eq:contrained_rl}
    \begin{aligned}
        \max_{D \in \Delta(\Pi)} \quad & 
    \mathbb{E}_{\pi \sim D} \left[ \mathbb{E}_{\mathbf{x} \sim \mu} \left[ V_{0,0}^\pi(\mathbf{x}) \right] \right] \\
        \text{s.t.} \quad 
        & \mathbb{E}_{\pi \sim D} \left[ \mathbb{E}_{\mathbf{x} \sim \mu} \left[ V_{i,0}^\pi(\mathbf{x}) \right] \right] \le \alpha_i,
        \end{aligned}
\end{equation}
for all $i \in \{1, \dots, m\}$, where $\Pi$ is the policy class and $D \in \Delta(\Pi)$ denotes a distribution over policies.  
Here, $V_{i,0}^\pi(\mathbf{x})$ is the value function at the initial time step for objective $i$, with $i=0$ denoting the primary objective and $i \ge 1$ corresponding to constraint objectives. 

To solve the constrained optimization problem \eqref{eq:contrained_rl}, we introduce its Lagrangian:
\begin{equation}
    \begin{aligned}
        &\mathcal{L}(D,\lambda) 
= \mathbb{E}_{\pi \sim D} \left[ \mathbb{E}_{\mathbf{x} \sim \mu} \left[ V_{0,0}^\pi(\mathbf{x}) \right] \right]\\ 
        & \hspace{1cm} + \sum_{i=1}^m \lambda_i \left( \alpha_i - \mathbb{E}_{\pi \sim D} \left[ \mathbb{E}_{\mathbf{x} \sim \mu} \left[ V_{i,0}^\pi(\mathbf{x}) \right] \right] \right),
    \end{aligned}
\end{equation}
where $\lambda \in \mathbb{R}^m_{\ge 0}$ are the Lagrange multipliers. This formulation can be interpreted as a zero-sum game between:  
(i) a \emph{Learner} selecting a distribution over policies $D \in \Delta(\Pi)$ to maximize $\mathcal{L}$, and  
(ii) a \emph{Regulator} selecting nonnegative multipliers to minimize $\mathcal{L}$.  
The regulator’s strategy space is the compact set $\Lambda \;=\; \left\{ \lambda \in \mathbb{R}^m_{\ge 0} \;:\; \|\lambda\|_1 \le C \right\}.$ Under these conditions, by~\cite{sion1958general}'s minimax theorem, 
\[
 L^* \coloneqq \min_{\lambda \in \Lambda} \; \max_{D \in \Delta(\Pi)} \; \mathcal{L}(D,\lambda) = \max_{D \in \Delta(\Pi)}\min_{\lambda \in \Lambda}\mathcal{L}(D,\lambda) .
\]

Note that because the learner’s objective is linear in D, a best-response to any fixed $\lambda_t$ is attained at an extreme point of $\Delta(\Pi)$, and thus each $D_t$ corresponds to a single policy. We can define an approximate minimax equilibrium as follows.

\begin{definition}[$\nu$-approximate Minimax Equilibrium]   \label{def:minimax_equilibrium}
    $(\hat{D},\hat{\lambda})$ is a $\nu$-approximate minimax equilibrium if:
    \[
        \max_{D \in \Delta(\Pi)} \mathcal{L}(D,\hat{\lambda}) - \nu \leq \mathcal{L}(\hat{D},\hat{\lambda}) \leq \min_{\lambda \in \Lambda} \mathcal{L}(\hat{D},\lambda) + \nu.
     \]
\end{definition}

To compute an approximate solution (as is shown in \cref{def:minimax_equilibrium}) to the minimax problem \(\min_{\lambda \in \Lambda} \max_{D \in \Delta(\Pi)} \mathcal{L}(D,\lambda)\) in practice, we adopt a repeated game perspective.  
In this view, the learner and the regulator iteratively update their strategies over $T$ rounds, and the average strategies converge to an approximate equilibrium.  
Our approach builds on the seminal best-response vs. no-regret result of \cite{Freund1996GameTO}:

\begin{theorem}[(Informal) Best-Response vs. No-Regret \cite{Freund1996GameTO}]\label{thm:best_response_no_regret}
    Let $(D_1,\ldots,D_T)$ be the sequence of best-response (distributions over) policies and let $(\lambda_1,\ldots,\lambda_T)$ be the corresponding sequence of Lagrangian weights maintained by the learner and regulator respectively. Then $(\bar{D},\bar{\lambda})$ form an approximate minimax equilibrium of the game defined by $\mathcal{L}$, where $\bar{D} = \frac{1}{T}\sum_{t=1}^T D_t$ and $\bar{\lambda} = \frac{1}{T}\sum_{t=1}^T \lambda_t$. 
\end{theorem}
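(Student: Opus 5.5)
The plan is the standard Freund--Schapire argument. I will make it quantitative by assuming the regulator runs a no-regret algorithm over $\Lambda$ with regret $\mathrm{Reg}(T)$, e.g.\ projected online gradient descent or exponentiated gradient on the scaled simplex. The regulator's loss at round $t$ is $\lambda \mapsto \mathcal{L}(D_t,\lambda)$, which is linear in $\lambda$. Its gradient has entries $\alpha_i - \mathbb{E}_{\pi\sim D_t}\mathbb{E}_{\mathbf{x}\sim\mu}[V^\pi_{i,0}(\mathbf{x})]$, bounded in magnitude by $\max(HR,\alpha_i)$. So standard OGD on a set of $\ell_1$-diameter $O(C)$ gives $\mathrm{Reg}(T) = O(C\,HR\sqrt{T})$, or $O(C\,HR\sqrt{T\log m})$ for exponentiated gradient. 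The learner plays an exact, or $\epsilon$-approximate, best response $D_t \in \arg\max_D \mathcal{L}(D,\lambda_t)$. The target is to show that $(\bar D,\bar\lambda)$ is a $\nu$-approximate minimax equilibrium with $\nu = \mathrm{Reg}(T)/T + \epsilon$.

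The key observation is that $\mathcal{L}$ is bilinear in $(D,\lambda)$: it is linear in $D$ through the expectation and affine in $\lambda$. Hence $\mathcal{L}(\bar D,\lambda) = \frac1T\sum_t \mathcal{L}(D_t,\lambda)$ and $\mathcal{L}(D,\bar\lambda) = \frac1T\sum_t\mathcal{L}(D,\lambda_t)$. Write $\bar v := \frac1T\sum_t \mathcal{L}(D_t,\lambda_t)$. I will establish two chains of inequalities.

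First, by the no-regret guarantee, for every $\lambda\in\Lambda$,
\[
\bar v \le \tfrac1T\textstyle\sum_t \mathcal{L}(D_t,\lambda) + \tfrac{\mathrm{Reg}(T)}{T} = \mathcal{L}(\bar D,\lambda) + \tfrac{\mathrm{Reg}(T)}{T},
\]
so $\min_\lambda \mathcal{L}(\bar D,\lambda) \ge \bar v - \mathrm{Reg}(T)/T$.

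Second, by best response, for every $D$,
\[
\bar v \ge \tfrac1T\textstyle\sum_t \mathcal{L}(D,\lambda_t) - \epsilon = \mathcal{L}(D,\bar\lambda) - \epsilon,
\]
so $\max_D \mathcal{L}(D,\bar\lambda) \le \bar v + \epsilon$.

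Specializing the first chain to $\lambda=\bar\lambda$ and the second to $D=\bar D$ sandwiches $\mathcal{L}(\bar D,\bar\lambda)$ within $[\bar v - \mathrm{Reg}(T)/T,\ \bar v+\epsilon]$. Combining the inequalities gives
\[
\max_D\mathcal{L}(D,\bar\lambda) - \nu \le \mathcal{L}(\bar D,\bar\lambda) \le \min_\lambda \mathcal{L}(\bar D,\lambda)+\nu,
\]
which is exactly \cref{def:minimax_equilibrium}. As a byproduct, both $\mathcal{L}(\bar D,\bar\lambda)$ and $\bar v$ are within $\nu$ of $L^*$, since $\min_\lambda\max_D \le \max_D \mathcal{L}(D,\bar\lambda)$ and $\max_D\min_\lambda \ge \min_\lambda \mathcal{L}(\bar D,\lambda)$.

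The main obstacle is not the algebra but making the regret bound legitimate. The regulator's losses must be uniformly bounded on the compact set $\Lambda$, which is why we need $V\in[0,HR]$ and $\|\lambda\|_1\le C$. We also need the order of play to be handled correctly: $\lambda_t$ is chosen before $D_t$, so the no-regret algorithm faces an adaptive adversary, and the regret bounds for OGD and Hedge hold against adaptive adversaries for linear losses. I will also note that $\bar D$ is a genuine element of $\Delta(\Pi)$, namely the uniform mixture over the $T$ best-response policies, and that $\bar\lambda\in\Lambda$ by convexity. Because the statement is informal, I will close by remarking that taking $T$ large enough that $\mathrm{Reg}(T)/T\le\nu-\epsilon$ yields the claim.
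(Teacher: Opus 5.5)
Your proposal is correct. It is exactly the standard Freund--Schapire best-response vs.\ no-regret argument that the paper invokes by citation (the paper gives no proof of its own), and it yields $\nu=\epsilon+\mathrm{Reg}(T)/T$. One small slip: Zinkevich's OGD bound pairs the $\ell_2$ diameter with the $\ell_2$ gradient norm, so with $m$ constraints the OGD regret is $O(C\sqrt{m}\,HR\sqrt{T})$ rather than $O(C\,HR\sqrt{T})$; this does not affect the conclusion.
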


We next quantify the learner’s ability to produce an approximate best-response policy, following~\cite{eaton2025intersectionalfairnessreinforcementlearning}:

\begin{definition}[$\epsilon$-approximate best-response ]\label{def:epsilon_best_response}
    A distribution over policies $D^*$ is an $\epsilon$-best-response if for a given $\lambda \in \Lambda$, 
    \[
    \mathcal{L}(D^*,\lambda) \geq \max_{D \in \Delta(\Pi)} \mathcal{L}(D,\lambda)-\epsilon.
    \]
\end{definition}

Given a fixed $\lambda \in \Lambda$ from the regulator, the learner’s best-response problem can be expressed as optimizing a scalarized reward function~\cite{eaton2025intersectionalfairnessreinforcementlearning}:
\[
    r_\lambda(x,a) = r_0(x,a) + \sum_{i=1}^m \lambda_i \left( \frac{\alpha_i}{H} - r_i(x,a) \right).
\]
Thus, in practice, the best-response reduces to solving a single-objective RL problem parameterized by $\lambda$. In this paper, we implement this step using Deep Q-Learning (DQN) \cite{mnih2015human}, although many other deep RL algorithms \cite{mnih2013playing, schulman2017proximal,haarnoja2018soft} could be used in its place.


For the regulator, we define its regret with respect to $\Lambda$ as follows:
\begin{equation}    \label{eq:regulator_regret}
    \text{Reg}(\lambda_{1:T})=\sum_{t=1}^T \mathcal{L}(D_t,\lambda_t)-\min_{\lambda \in \Lambda}\sum_{t=1}^T \mathcal{L}(D_t,\lambda).
\end{equation}
If the regret of an algorithm is sublinear in $T$, equivalently, if the average regret tends to $0$ as $T \to \infty$, we refer to it as a \textit{no-regret} algorithm throughout this paper. One concrete example is Online Gradient Descent (OGD):


\begin{theorem}[Online Gradient Descent No-Regret \cite{zinkevich}]\label{thm:ogd_no_regret}
    For a sequence of differentiable functions $f^1,\ldots,f^T:S \rightarrow \mathbb{R}$ such that $\|\nabla f^t(x)\| \leq G$ for any $1 \leq t \leq T, x \in S$, i.e., $G$ is an upper bound on the gradient magnitudes. Then, for the following sequence $x^1,\ldots,x^T$ built according to 
    \[
    x^{t+1} = \Pi_S(x^t-\eta \nabla f^t(x^t)),
    \]
    where $\eta = \frac{D}{G \sqrt{T}}$ and $D$ is an upper bound on the diameter of the closed convex set $S$, the regret is bounded as:
    \[
    \sum_{t=1}^T f^t(x^t) \leq \min_{x \in S}\Big(\sum_{t=1}^Tf^t(x)\Big)+DG\sqrt{T}.
    \]
\end{theorem}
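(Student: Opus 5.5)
The plan is the standard Online Gradient Descent analysis via a one-step potential argument, followed by telescoping. Let $x^\star \in \arg\min_{x\in S}\sum_{t=1}^T f^t(x)$, which exists when $S$ is compact; otherwise fix an arbitrary comparator $x\in S$ and take the infimum at the end. Although it is not written in the statement, I will assume each $f^t$ is convex, as in \cite{zinkevich}. This is exactly the regime in which the theorem is used here: the regulator's loss $\lambda\mapsto\mathcal{L}(D_t,\lambda)$ is linear in $\lambda$. Convexity gives the linearization
\[
f^t(x^t)-f^t(x^\star)\le \langle \nabla f^t(x^t),\,x^t-x^\star\rangle,
\]
so it suffices to bound $\sum_t \langle \nabla f^t(x^t), x^t-x^\star\rangle$.

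Next I control a single step using the projection. Set $y^{t+1}=x^t-\eta\nabla f^t(x^t)$, so that $x^{t+1}=\Pi_S(y^{t+1})$. Since $S$ is closed and convex and $x^\star\in S$, projection onto $S$ is nonexpansive toward points of $S$, which gives $\|x^{t+1}-x^\star\|\le\|y^{t+1}-x^\star\|$. Expanding the square,
\[
\|x^{t+1}-x^\star\|^2\le \|x^t-x^\star\|^2-2\eta\langle\nabla f^t(x^t),x^t-x^\star\rangle+\eta^2\|\nabla f^t(x^t)\|^2 .
\]
Rearranging and using $\|\nabla f^t\|\le G$ yields
\[
\langle\nabla f^t(x^t),x^t-x^\star\rangle\le \frac{\|x^t-x^\star\|^2-\|x^{t+1}-x^\star\|^2}{2\eta}+\frac{\eta G^2}{2}.
\]

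Summing over $t$, the right-hand side telescopes. The total is at most $\frac{\|x^1-x^\star\|^2}{2\eta}+\frac{\eta G^2T}{2}$, which is at most $\frac{D^2}{2\eta}+\frac{\eta G^2 T}{2}$ by the diameter bound. With $\eta=\frac{D}{G\sqrt T}$ each term equals $\frac{DG\sqrt T}{2}$, so the total regret is at most $DG\sqrt T$, as claimed.

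The main obstacle is the convexity hypothesis. The statement as written only asks for differentiability, and without convexity the linearization step fails, so I would make convexity explicit. A smaller point is that the projection step needs the comparator to lie in $S$; that holds by construction. For the regulator, $S=\Lambda$ is the scaled simplex, whose diameter is at most $2C$. Its gradients are the vectors $(\alpha_i-\mathbb{E}[V_{i,0}])_i$, with norm at most $\sqrt m\,HR$ or a similar bound, which is what makes \Cref{thm:ogd_no_regret} directly applicable there.
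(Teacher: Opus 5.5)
Your argument is correct. It is the standard potential-and-telescoping proof from Zinkevich's paper. The paper does not prove this theorem itself; it only cites that source, so your route matches the intended one. The one hypothesis you use that the statement does not spell out, $x^1\in S$, is forced anyway, since $f^1$ is only defined on $S$.

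You are also right that convexity of each $f^t$ has to be added, because as written the theorem is false. For example, take a fixed nonconvex $f^t=f$ and start at a non-global stationary point $x^1$: the iterates never move, so the regret grows linearly in $T$. In the paper's application $\lambda\mapsto\mathcal{L}(D_t,\lambda)$ is affine, so the extra hypothesis costs nothing there.
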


Having introduced best-response and no-regret strategies in \cref{def:epsilon_best_response} and \eqref{eq:regulator_regret}, it is important to note that the guarantee in \cref{thm:best_response_no_regret} applies to the performance of the time-averaged solution $\bar{D}$ in expectation. As a result, individual policies drawn from the support of $\bar{D}$ may violate constraints, as long as such violations are offset on average across the mixture. Consequently, closeness of the mixture to the minimax value does not imply that any single policy in its support satisfies all constraints.

To illustrate this phenomenon, consider a simple MDP with state space $\mathcal{X}=X_{\text{safe}}\cup X_{\text{left}}\cup X_{\text{right}}$, where deviations to either side correspond to entering a danger region. While an ideal policy would remain in the safe region at all times, a mixture policy could alternate between left and right, consistently violating constraints yet appearing feasible in expectation due to cancellation. In Appendix \ref{app:err_canc}, we build on the framework of \citet{eaton2025intersectionalfairnessreinforcementlearning} to reformulate the problem and show that, when the mixture is near-minimax, it is nevertheless possible to probabilistically extract a single iterate whose Lagrangian value is close to the minimax value. With this theoretical motivation in place, we now describe the MDP model used to capture the dynamics of the fulfillment center.

\subsection{MDP Model of Warehouse Floor}
We begin by defining the state space, followed by the action space, transition dynamics, and reward structure.

\vspace{2mm}

The warehouse floor $F$ has capacity for $\texttt{floor\_max}$ totes, which we assume to be large. Each state $x \in \mathcal{X}$ corresponds to a specific tote slot, which may be occupied or empty, together with aggregate information describing the global state of the floor. We represent $x$ as a feature vector comprising the following task relevant quantities:
\begin{equation}    \label{eq:state_def}
    \begin{aligned}
&x_t = \Big(N_{\texttt{large}}, \texttt{ETPH}, L^{H}_{S}, L^{H}_{D},L^{R}_{S}, L^{R}_{D}, \\ & \hspace{2cm} O_{k}, \texttt{LTE}, n_{\texttt{item}}, n_{\texttt{pick}}, \texttt{GCU},t \Big)
\end{aligned}
\end{equation}
We describe each component below:

\noindent \textbullet\ $N_{\texttt{large}} \in \mathbb{Z}_{+}$: Total number of larger totes on the floor.\\
\noindent \textbullet\ $\texttt{ETPH} \in \R_+$: Empty Totes Per Hour, which measures the number of source totes emptied via consolidation per hour. \\
\noindent \textbullet\ $L^{H}_{S}, L^{H}_{D}, L^{R}_{S}, L^{R}_{D} \in \mathbb{Z}_{+}$: 
Queue lengths of source $(S)$ and destination $(D)$ totes at human $(H)$ and robotic $(R)$ stations.\\
\noindent \textbullet\ $O_{k}$: Occupancy; 
$0$ if empty, $1$ if occupied by a larger tote, $2$ if occupied by a smaller tote.\\
\noindent \textbullet\ $n_{\texttt{item}} \in \mathbb{Z}_{+}$: Number of items in the tote.\\
\noindent \textbullet\ $\texttt{LTE} \in (0,1]$: Likelihood to be successfully emptied by robot stations, approximated by $1/n_{\texttt{item}}$.\\
\noindent \textbullet\ 
$n_{\texttt{pick}} \in \mathbb{Z}_{+}$: Number of items in the tote scheduled to be picked that day.\\
\noindent \textbullet\ $\texttt{GCU} \in \R_+$: Gross Cubic Utilization, approximated as the estimated rectangular volume of all items in the tote.\\
\noindent \textbullet\ $t \in [0, H]$: Current time step in the episode.

\vspace{2mm}

The action space is given by
\[\scalebox{0.85}{$
\mathcal{A} =
\left\{
\begin{array}{c}
\texttt{ignore} \\
\texttt{not\_ignore}
\end{array}
\right\}\times
\left\{
\begin{array}{c}
\texttt{source} \\
\texttt{destination}
\end{array}
\right\}\times
\left\{
\begin{array}{c}
\texttt{human} \\
\texttt{robot}
\end{array}
\right\},
$}\]
which encodes the decision for a given tote slot, to either skip it or assign the tote it contains to a human or robotic station, and to designate it as either a source or destination tote.

State transitions are determined by the environment's update rules for floor and station states. Specifically:
(i) the number of larger totes decreases when one is sent as a source tote to a station and increases when new larger totes are stored on the floor;
(ii) $\etph$ evolves according to the aggregate outcomes of consolidation operations across all stations, depending on the assigned station type, tote composition, and other operational factors;
(iii) station queue lengths are updated based on the action’s station and role assignment.
After each action, the next floor slot is selected via linear traversal of the remaining slots. At the end of each day, the stow and pick processes (by which new totes are added to the floor and items from totes are removed to fulfill order demands) are simulated and we shuffle the tote orderings. 

\vspace{2mm}

The reward functions $\{r_i\}_{i=0}^m$ correspond to the key performance indicators (KPIs) of interest, such as $\etph$, $\nlt$, Source to Destination tote ($S/D$) ratio, and consolidation station capacity compliance. Some rewards directly capture operational constraints specified in our optimization problem. The primary objective reward is
\begin{equation}    \label{eq:etph_reward}
    r_0(x_t,a_t,x_{t+1}) = x_{t+1}[\etph],
\end{equation}
which captures the throughput efficiency achieved by the current consolidation decision. Since in our setting, the state includes some floor-wide summary statistics , we use $\etph$ as both part of the state space and the reward signal $r_0$ we are optimizing over. We use the shorthand $x_t[\etph]$ to corresponding to the value of that floor-wide statistic at time $t$ in state $x_t$. The first constraint-based reward is
\begin{equation}    \label{eq:nlt_constraint}
    r_1(x_t,a_t,x_{t+1})=
\frac{1}{H}
\left(
\alpha_1
-
\frac{x_{t+1}[\nlt]}{\texttt{floor\_max}}
\right),
\end{equation}
which penalizes deviations from a desired fraction of larger totes on the floor, reflecting structural constraints imposed by the fulfillment center layout. The second constraint-based reward is
\begin{equation}
\label{eq:cap_constraint}
\scalebox{1}{$
r_2(x_t,a_t,x_{t+1}) = \frac{1}{H} \left( -\alpha_2 + \frac{x_{t+1}[L^H_S] + x_{t+1}[L^R_S]}{1 + x_{t+1}[L^H_D] + x_{t+1}[L^R_D]} \right)
$}
\end{equation}
which encourages a balanced $S/D$ assignment across stations. By limiting excessive usage of destination totes, this constraint reduces downstream swapping and improves overall consolidation efficiency. The third and fourth constraint-based rewards are: 
\begin{equation}    \label{eq:human_Q}
r_3(x_t,a_t,x_{t+1})
=
\frac{1}{H}
\left(
\alpha_3
-
\big(x_{t+1}[L^H_S] + x_{t+1}[L^H_D]\big)
\right),
\end{equation}
\begin{equation}    \label{eq:robot_Q}
r_4(x_t,a_t,x_{t+1})
=
\frac{1}{H}
\left(
\alpha_4
-
\big(x_{t+1}[L^R_S] + x_{t+1}[L^R_D]\big)
\right),
\end{equation}
which enforce capacity constraints at human and robotic stations by penalizing excessive queue lengths, thereby preventing overload and ensuring stable operation.

We aim to maintain the Markov property in our MDP formulation. However, in practice, certain modeling simplifications, which introduced to reduce the dimensionality of the state and action spaces, may introduce mild deviations. We formulate the following constrained RL problem based upon the MDP and problem described above.
\begin{equation}    \label{eq:constrained_rl}
    \begin{aligned}
    \max\limits_{D \sim \Delta(\Pi)} & \; \E_{\pi \sim D} \Big[ \sum_{t=0}^{H-1} \etph_t(\pi) \Big] \\
    \text{s.t. } & \E_{\pi \sim D} \Big[\sum_{t=0}^{H-1} N_{\texttt{large},t}(\pi) \Big] \leq \alpha_1, \\
    &\E_{\pi \sim D} \Big[ -\sum_{t=0}^{H-1} (S/D)_t(\pi) \Big] \leq -\alpha_2, \\
    &\E_{\pi \sim D} \Big[ \sum_{t=0}^{H-1} \big(L^{H}_{S,t}(\pi)+L^{H}_{D,t}(\pi)\big) \Big] \leq \alpha_3, \\
     &\E_{\pi \sim D} \Big[ \sum_{t=0}^{H-1} \big(L^{R}_{S,t}(\pi)+L^{R}_{D,t}(\pi)\big) \Big] \leq \alpha_4.
\end{aligned}
\end{equation}

We can label the bounds in \eqref{eq:constrained_rl}, for example, as $\alpha_1 = L$, $\alpha_2 = SD$,  $\alpha_3 = B_H$, and $\alpha_4 = B_R$, where $L$ stands for an upper bound on the number of large totes on the floor, SD stands for a lower bound on the source to destination ratio, and $B_H,B_R$ stand for the human and robotic budget capacities.  
Let $f(\pi) \in \mathbb{R}^m$ denote the vector of constraint values corresponding to policy $\pi$, and let $\alpha \in \mathbb{R}^m$ be the corresponding bound vector, adjusted for the constraint signs. The Lagrangian can then be expressed as
\begin{align*}
    \mathcal{L}(D, \lambda) \;=\; \E_{\pi \sim D} \big[ \etph(\pi)  + \sum_{i=1}^m \lambda_i \big( \alpha_i - f_i(\pi) \big)\big].
\end{align*}

\section{MORL via Best-Response vs.\ No-Regret Dynamics}

To solve the constrained optimization problem in \eqref{eq:constrained_rl}, we adopt a repeated game perspective between two players: a \emph{learner} and a \emph{regulator}. This framework follows the best-response versus no-regret paradigm used in prior work on constrained and multi-objective reinforcement learning \cite{eaton2025intersectionalfairnessreinforcementlearning,calvo,miryoosefi2019reinforcement}. At each round, the regulator specifies a vector of Lagrange multipliers $\lambda_t$, which defines a scalarized reward function. The learner then computes an approximate best-response policy to this reward. The resulting policy is evaluated to estimate constraint satisfaction, and this feedback is used to update the regulator’s multipliers via a no-regret procedure (here we use Online Gradient Descent).

This interaction is repeated over multiple rounds, and the time-averaged strategies of the learner and regulator are returned. Under standard no-regret guarantees, this procedure converges to an approximate minimax equilibrium of the underlying Lagrangian game, yielding policies that balance throughput maximization with constraint enforcement. \cref{alg:repeated_game} summarizes this repeated game procedure, explicitly detailing the interaction between the learner’s best-response updates and the regulator’s no-regret multiplier updates.

\begin{algorithm}[t]
\caption{Repeated Game: Learner vs Regulator}
\begin{algorithmic}[1]
\STATE \textbf{Initialize:} $\lambda_0 \in \Lambda$, total rounds $T$, step size $\eta$
\STATE $\bar{D} \leftarrow 0$, $\bar{\lambda} \leftarrow 0$ \COMMENT{Time averaged strategies}
\FOR{$t = 1$ to $T$}
    \STATE \textbf{Learner:} $D_t \leftarrow \text{DQN\_Best\_Response}(\lambda_{t-1})$
    \STATE $g_t \leftarrow \mathrm{EvaluatePolicy}(D_t)$ \COMMENT{Constraint slacks}
    \STATE \textbf{Regulator:} $\lambda_t \leftarrow \mathrm{Proj}_{\Lambda}(\lambda_{t-1} - \eta g_t)$ \COMMENT{Online gradient descent}
    \STATE $\bar{D} \leftarrow \frac{1}{t}\bigl((t-1)\bar{D} + D_t\bigr)$ \COMMENT{Update time average}
    \STATE $\bar{\lambda} \leftarrow \frac{1}{t}\bigl((t-1)\bar{\lambda} + \lambda_t\bigr)$
\ENDFOR
\STATE \textbf{Return:} $(\bar{D}, \bar{\lambda})$ \COMMENT{Time averaged strategies}
\label{alg:repeated_game}
\end{algorithmic}
\end{algorithm}

\begin{figure}[t]
  \vskip 0.2in
  \begin{center}
    \centerline{\includegraphics[width=\columnwidth]{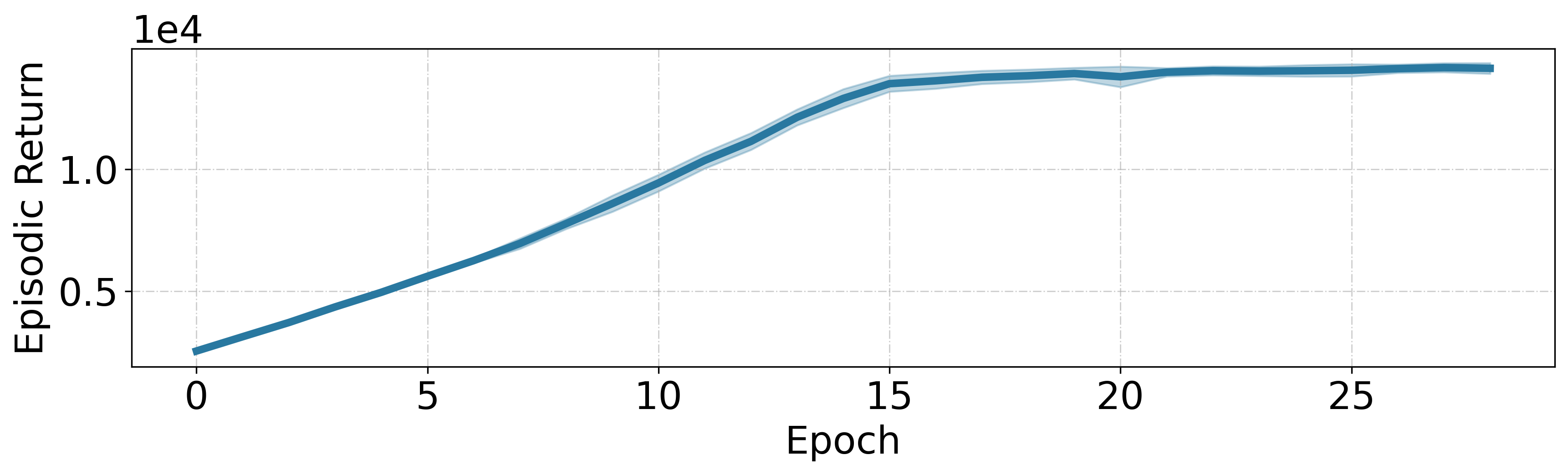}}
    \caption{Episodic returns for DQN in the single-objective ETPH setting, showing unnormalized performance of a best-response policy optimizing ETPH.}
    \label{fig:single_obj}
  \end{center}
\end{figure}

\section{Experiments}

We evaluate our approach using an abstract, event-driven simulator that captures the key operational dynamics of a large-scale human-robot collaborative fulfillment center. The simulator models tote consolidation, station-level capacity constraints, heterogeneous human and robotic capabilities, and evolving floor-level state variables, while abstracting away low-level physical details. This design allows us to isolate the decision-making problem of tote selection and assignment, stress test constraint satisfaction under realistic load conditions, and conduct controlled comparisons across algorithms at scale.

\subsection{Single-Objective  Optimization}

To validate the learning capability of the base RL algorithm, we first evaluate DQN in a single-objective setting using the heuristic ETPH reward. Each simulated day consists of $60 \times 24 = 1440$ decision steps, and a single episode spans $N_{\text{days}} = 10$. The agent is trained for $30$ episodes per run, corresponding to one learner update round in the repeated game procedure. As shown in \Cref{fig:single_obj}, the learned policy exhibits steady performance improvement, with episodic returns increasing substantially over the course of training.

\begin{figure}[t]
  \vskip 0.2in
  \begin{center}
    \centerline{\includegraphics[width=\columnwidth]{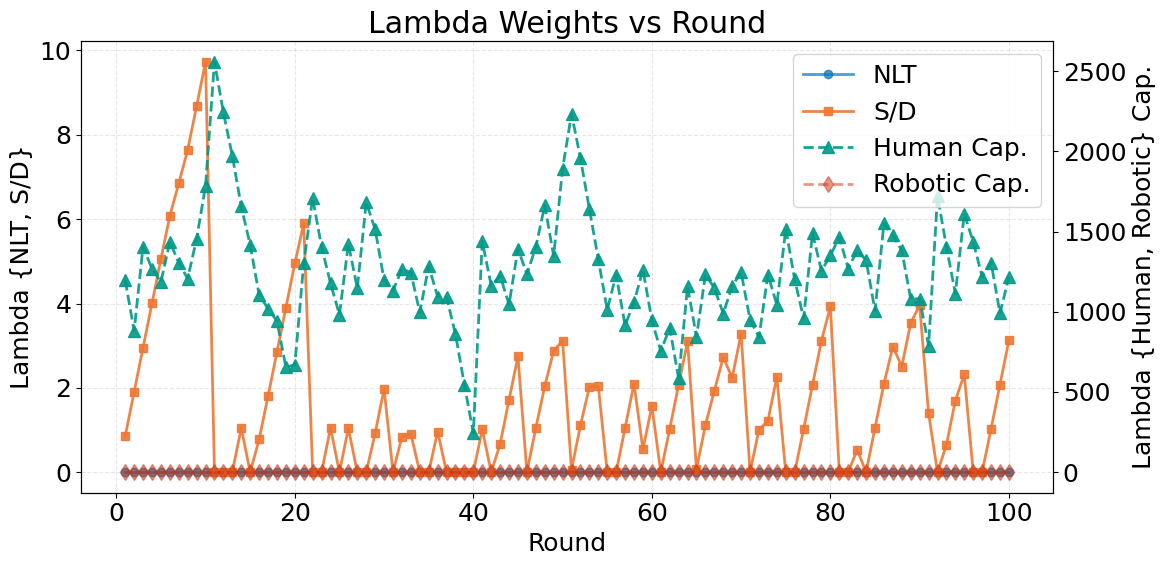}}
    \caption{Average Lagrange multipliers $\bar{\lambda}$ over training rounds. The $\nlt$ and robot capacity multipliers remain at zero, while the remaining constraints exhibit significant oscillations.}
    \label{fig:lambda_avg}
  \end{center}
\end{figure}

\subsection{Multi-Objective Optimization with MORL}

\begin{figure*}[t]
  \vskip 0.2in
  \centering
  \begin{subfigure}[t]{\linewidth}
    \centering
    \includegraphics[width=\linewidth]{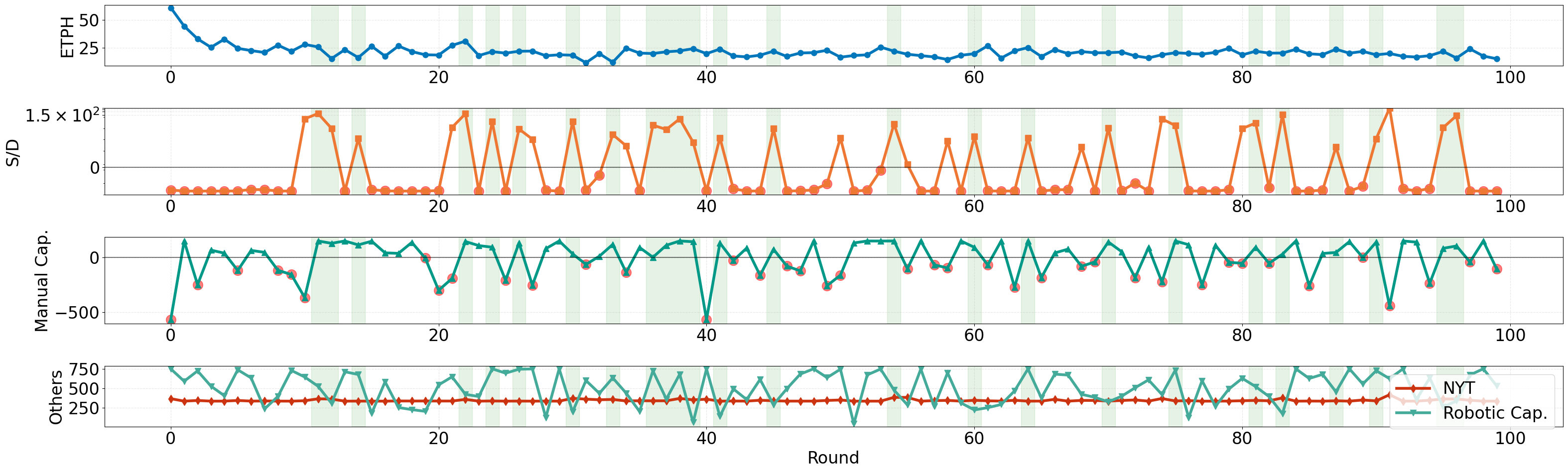}
    \caption{Single round policies}
    \label{fig:single_v_rounds_policy_perf}
  \end{subfigure}

  \vspace{0.15in}

  \begin{subfigure}[t]{\linewidth}
    \centering
    \includegraphics[width=\linewidth]{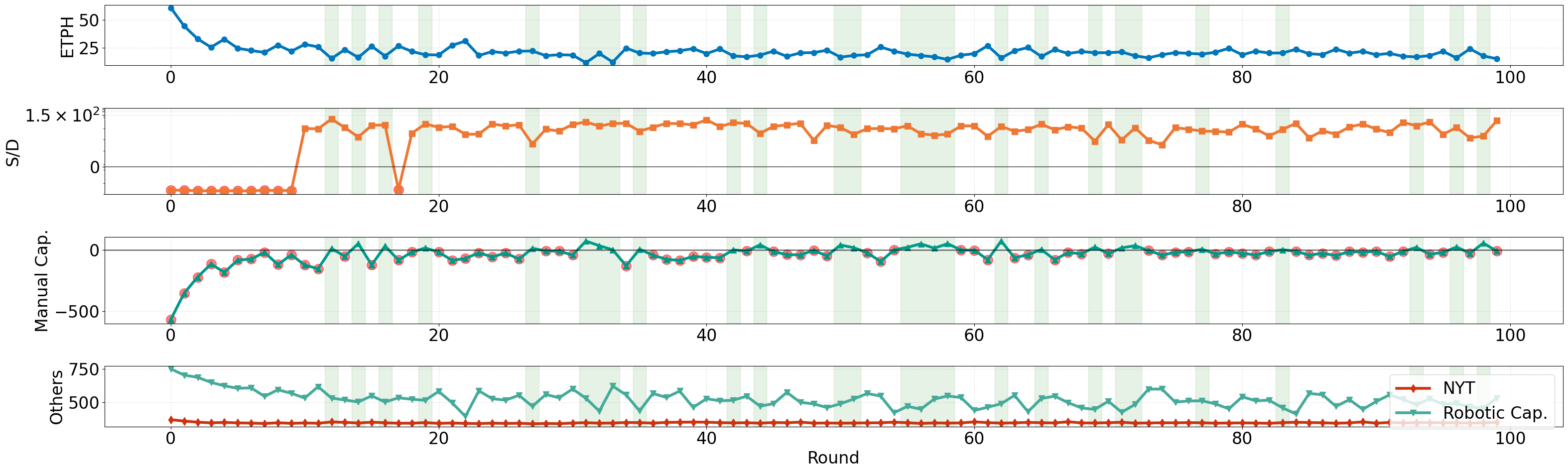}
    \caption{Time averaged policies}
    \label{fig:time_avg_v_rounds_policy_perf}
  \end{subfigure}

  \caption{
  Policy performance over repeated game rounds.
  (a) Performance of individual policies at each round.
  (b) Performance of time averaged policies.
  Each plot shows the global ETPH objective and four constraint values. Black horizontal lines denote constraint thresholds, red markers indicate violations, and green vertical regions indicate rounds where all constraints are simultaneously satisfied.
  }
  \label{fig:policy_perf_over_rounds}
\end{figure*}

As shown in \Cref{fig:lambda_avg,fig:single_v_rounds_policy_perf}, the regulator steers the learner through the Lagrange multipliers $\lambda_t$ in a repeated game between the learner and the regulator, with $C = 20{,}000$ in this experiment. Each round in \Cref{fig:single_v_rounds_policy_perf} corresponds to one interaction. In the objective and constraint plots, the horizontal line at $y = 0$ denotes zero constraint violation, with values $\geq 0$ indicating satisfaction and negative values indicating violation; for clarity, all quantities are shifted and plotted under the constrained formulation in \eqref{eq:contrained_rl}. In this experiment, the $\nlt$ and robot capacity constraints are largely satisfied across rounds, and the corresponding multipliers remain near zero, indicating that these constraints are typically inactive, whereas tightening the manual capacity and S/D constraints incurs a reduction in ETPH, reflecting the inherent trade off between throughput and constraint enforcement. The oscillations in $\lambda_t$ closely track transitions between constraint satisfaction and violation, highlighting the tight coupling between regulator updates and policy performance. Although feasibility is only guaranteed for the time averaged policy distribution, we empirically observe rounds in which the single policy learned at that round satisfies all constraints; we defer a detailed analysis of feasible stationary policies to \Cref{sec:feasible_single_policy}.

\begin{figure}[t]
  \vskip 0.2in
  \begin{center}
\centerline{\includegraphics[width=\linewidth]{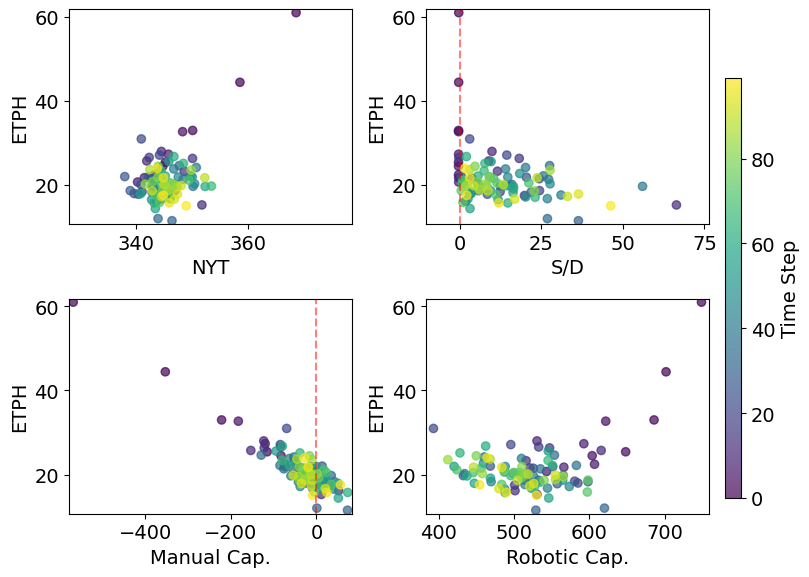}}
    \caption{Phase diagram of ETPH versus constraint satisfaction across rounds. Color indicates training rounds, red dashed lines denote constraint thresholds. As training progresses, time averaged policies trade ETPH for improved S/D and manual capacity satisfaction.}
    \label{fig:phase_diagram}
  \end{center}
\end{figure}

\begin{figure}[t]
  \vskip 0.2in
  \begin{center}
\centerline{\includegraphics[width=\columnwidth]{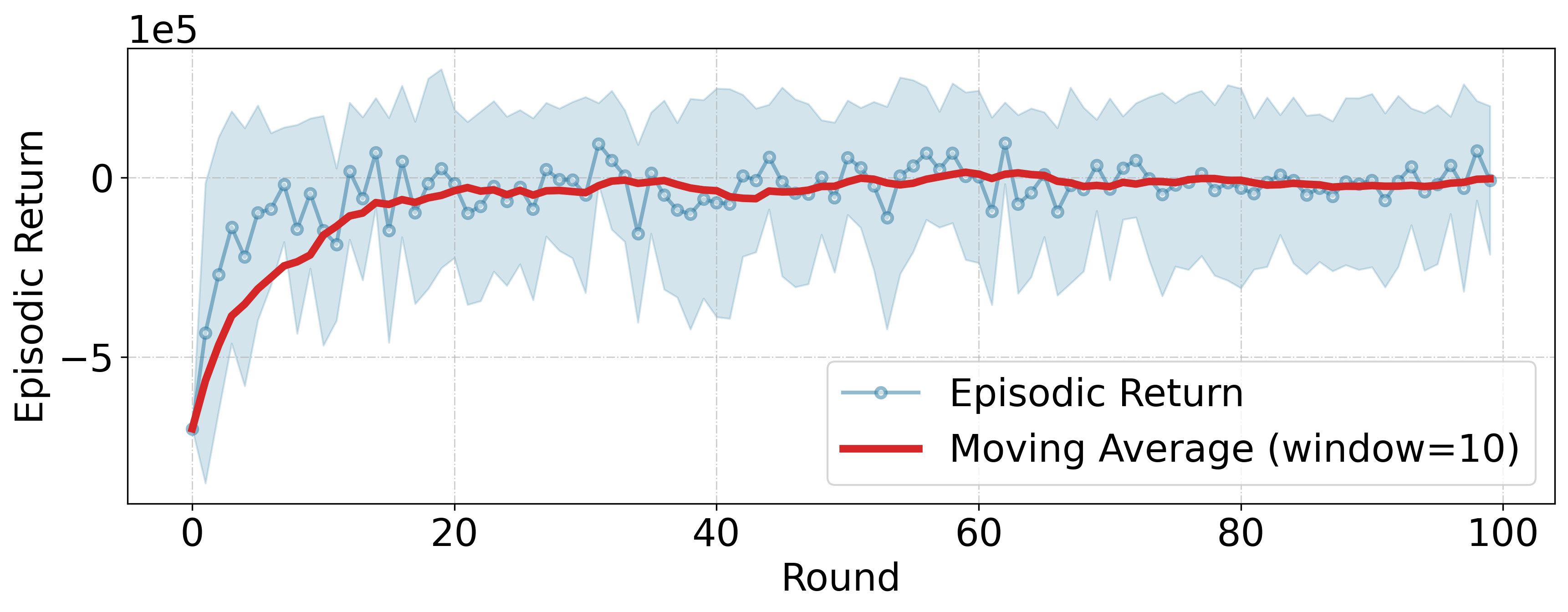}}
\caption{Average episodic return over rounds for time averaged policies parameterized by $\bar{\lambda}$, showing improved performance as policies are mixed across rounds.}
\label{fig:avg_episodic_return}
  \end{center}
\end{figure}

\cref{fig:time_avg_v_rounds_policy_perf} and \cref{fig:phase_diagram} illustrate the behavior of the time-averaged (mixture) policies produced by the repeated game dynamics. In line with the minimax and time-averaging guarantees established in \cref{sec:problem_setting_and_model}, the averaged policy distribution converges toward satisfying the S/D and manual capacity constraints, at the expense of some global ETPH. Consistent with this guarantee, \Cref{fig:avg_episodic_return} shows that, when evaluated with respect to the averaged multipliers $\bar{\lambda}$, the performance of the time-averaged policies improves as constraint satisfaction stabilizes. In practice, if exact feasibility is required, conservative slack can be introduced into the constraint thresholds. Given our theoretical guarantees on the near-minimax optimality of the time-averaged solution, such slack increases the probability that the resulting operating point satisfies the true desired constraints. 

\subsection{Feasible Stationary Policy via MORL}    \label{sec:feasible_single_policy}

\begin{figure}[t]
  \vskip 0.2in
  \begin{center}
    \centerline{\includegraphics[width=\columnwidth]{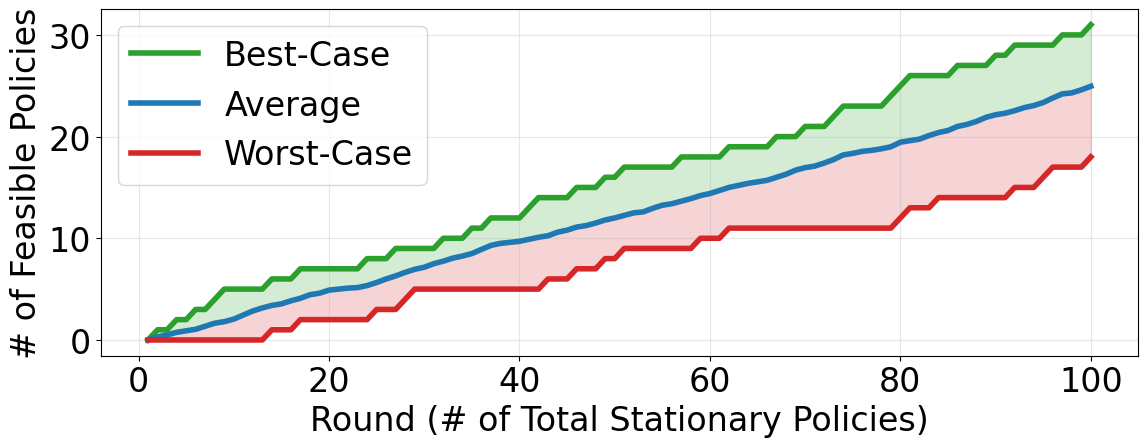}}
    \caption{Feasible stationary policies over training rounds. Although feasibility is guaranteed only in distribution, many individual policies are feasible in practice. Best, average, and worst over 20 seeds are shown.}
    \label{fig:feasible_stationary_policy_round}
  \end{center}
\end{figure}

\begin{table}[t]
  \caption{Comparison of ETPH and constraint satisfaction across MORL, random, and unconstrained policies. The unconstrained policy attains higher ETPH at the cost of severe capacity violations.}
  \label{tab:morl_v_baseline}
  \begin{center}
    \resizebox{\columnwidth}{!}{
      \begin{tabular}{lccc}
        \toprule
         & \makecell[c]{MORL \\ (single policy)} & \makecell[c]{Random \\ Actions} & Unconstrained \\
        \midrule
        \makecell[l]{Satisfies all \\ constraints?}
          & \textcolor{green}{$\surd$}
          & \textcolor{red}{$\times$}
          & \textcolor{red}{$\times$} \\
        \midrule
        ETPH
          & 20.52 & 9.19 & 61.81 \\

        $\nlt$ Slack
          & 362.62 & 359.15 & 370.62 \\

        \makecell[l]{S/D Slack}
          & 10.81 & 0.49 & \textcolor{red}{\bf{-0.37}} \\

        Manual Cap. Slack
          & 83.21
          & \textcolor{red}{\bf{-32.99}}
          & \textcolor{red}{\bf{-563.23}} \\

        Robot Cap. Slack
          & 258.23 & 571.01 & 743.68 \\
        \bottomrule
      \end{tabular}
    }
  \end{center}
  \vskip -0.1in
\end{table}

While our theoretical guarantees ensure feasibility only for the time-averaged policy distribution, we empirically observe that the repeated game dynamics often produce individual stationary policies that satisfy all constraints simultaneously. \Cref{fig:feasible_stationary_policy_round} summarizes this behavior by reporting the best, average, and worst cases over $20$ random seeds for the cumulative number of feasible single policies encountered across training rounds. Across runs, feasible stationary policies arise consistently, indicating that the learning dynamics frequently traverse regions of the policy space that satisfy all constraints, despite feasibility not being enforced at the level of individual iterates.

To assess the quality of such policies, \Cref{tab:morl_v_baseline} compares a representative feasible policy learned via MORL against random and unconstrained baselines. Although the unconstrained policy attains higher ETPH, it does so by severely violating capacity constraints, particularly manual capacity. In contrast, the MORL-derived policy achieves a balanced trade off, maintaining positive slack across all constraints while substantially outperforming the random baseline. Appendix \ref{app:err_canc} formalizes this empirical observation by introducing a modified theoretical framework that mitigates some of the effects of error cancellation and enables extraction of a single iterate from the sequence of learned iterates that achieves close to the value of the game.

\section{Conclusion}
We have explored a Multi-Objective RL framework for large-scale tote allocation and consolidation under realistic operational constraints. Our approach leverages theoretical results on best-response versus no-regret dynamics in zero-sum games, adapting them to the RL setting and introducing efficient implementation strategies to bridge the gap between theory and practice. Using a simulator that captures the core dynamics of fulfillment center operations, we demonstrate that our method is able to learn policies that effectively balance trade offs of KPIs.

This work opens several promising directions for future research. From a modeling perspective, exploring alternative MDP formulations and state/action space abstractions could yield further gains in scalability and realism. Algorithmically, initializing the learner with policies trained individually on each of the $m+1$ objectives, rather than retraining from scratch, may improve convergence speed and stability. Finally, incorporating strategic interactions and incentive structures between human and robotic agents presents an intriguing avenue for understanding and enhancing complementarity in hybrid fulfillment systems.


\clearpage

\bibliography{example_paper}
\bibliographystyle{icml2026}

\newpage
\appendix
\onecolumn

\section{Error Cancellations}
\label{app:err_canc}
Here, we will argue that if we have convergence of the randomized mixture of policies from the repeated game of no-regret vs. best-response dynamics from the learner and regulator, then there must be at least one policy in that set which we can select that approximately achieves the value of the game for this reformulated optimization problem. The broad intuition of this reformulation is that best-response vs. no-regret guarantees we satisfy the constraints on average for arbitrary convex constraints when feasible. The max of the constraint violations and $0$ is still convex if the constraints are individually convex. Now, the constraint value is never negative under this reformulation, so satisfying the constraint on average is only possible if we satisfy on most iterates and hence on some iterate.

\subsubsection{Reformulated Optimization Problem }
In the original formulation both negative and positive values of the constraint violation were included, which enabled a phenomenon known as error cancellation to occur where on average, the constraints are close to satisfied, but individual policies violate different constraints. Here, we reformulate the constraints using the positive part so that feasibility in expectation cannot be achieved via cancellations across different constraints, and violations of any single constraint are penalized uniformly.
  \begin{align*}
    \max \limits_{D \sim \Delta(\Pi)} &\; \E_{\pi \sim D} \Bigg[ \sum_{t=0}^{H-1} \etph_t(\pi) \Bigg] \\
    \text{s.t.} 
    &\quad  \max\Big\{\E_{\pi \sim D} \Bigg[ \sum_{t=0}^{H-1} N_{\texttt{large},t}(\pi) \Bigg]-\alpha_1,0\Big\} \leq 0, \\
    &\quad \max\Big\{\E_{\pi \sim D} \Bigg[ -\sum_{t=0}^{H-1} (S/D)_t(\pi) \Bigg]+\alpha_2,0\Big\} \leq 0, \\
    &\quad \max\Big\{\E_{\pi \sim D} \Bigg[ \sum_{t=0}^{H-1} \big(L^{H}_{S,t}(\pi)+L^{H}_{D,t}(\pi)\big) \Bigg]-\alpha_3,0\Big\}  \leq 0, \\
    &\quad \max\Big\{\E_{\pi \sim D} \Bigg[ \sum_{t=0}^{H-1} \big(L^{R}_{S,t}(\pi)+L^{R}_{D,t}(\pi)\big) \Bigg]-\alpha_4,0\Big\} \leq 0,
\end{align*}
This can be even further reduced to 
  \begin{align*}
    \max \limits_{D \sim \Delta(\Pi)} &\; \E_{\pi \sim D} \Bigg[ \sum_{t=0}^{H-1} \etph_t(\pi) \Bigg] \\
    \text{s.t.} 
    &\quad  \max\Big\{\E_{\pi \sim D} \Bigg[ \sum_{t=0}^{H-1} N_{\texttt{large},t}(\pi) \Bigg]-\alpha_1,\E_{\pi \sim D} \Bigg[ -\sum_{t=0}^{H-1} (S/D)_t(\pi) \Bigg]+\alpha_2,\\&\E_{\pi \sim D} \Bigg[ \sum_{t=0}^{H-1} \big(L^{H}_{S,t}(\pi)+L^{H}_{D,t}(\pi)\big) \Bigg]-\alpha_3,\E_{\pi \sim D} \Bigg[ \sum_{t=0}^{H-1} \big(L^{R}_{S,t}(\pi)+L^{R}_{D,t}(\pi)\big) \Bigg]-\alpha_4,0\Big\} \leq 0,
\end{align*}

\subsubsection{Algorithm}

\begin{algorithm}[H]
\caption{Repeated Game: Learner vs Regulator}
\begin{algorithmic}[1]
\STATE \textbf{Initialize:} $\lambda_0 \in \Lambda$, total rounds $T$, step size $\eta$
\STATE $\bar{D} \leftarrow 0$, $\bar{\lambda} \leftarrow 0$ \COMMENT{Time averaged strategies}
\FOR{$t = 1$ to $T$}
    \STATE \textbf{Learner:} $D_t \leftarrow \mathrm{DQNBestResponse}(\lambda_{t-1})$
    \STATE $g_t \leftarrow \mathrm{EvaluatePolicy}(D_t)$ \COMMENT{Constraint slacks}
    \STATE \textbf{Regulator:} $\lambda_t \leftarrow \mathrm{Proj}_{\Lambda}(\lambda_{t-1} - \eta g_t)$ \COMMENT{Online gradient descent}
    \STATE $\bar{\lambda} \leftarrow \frac{1}{t}\bigl((t-1)\bar{\lambda} + \lambda_t\bigr)$
\ENDFOR
\STATE \textbf{Return:} $\mathrm{Best}\bigl(\{D_t\}_{t \in [T]}, \bar{\lambda}\bigr)$ \COMMENT{Best single strategy under $\bar{\lambda}$}
\end{algorithmic}
\end{algorithm}


\subsubsection{Finding the Estimated Best Iterate}
We select the best $D_t$ from the algorithm based upon an estimated Lagrangian with respect to $\bar{\lambda}$.  Let $L(D,\lambda)=\E_{\pi\sim D}\big[V_0^\pi(\mu)\big]\;-\;\lambda\,[\,g(D)\,]_+$,
    where $g(D):=\max_{i\in[m]}(\E_{\pi \sim D}[V_i^\pi(\mu)]-\alpha_i)$ and $\lambda\ge 0$. In this presentation, it is worth noting that we are assuming that without loss of generality (WLOG) we can formulate our constrained optimization problem to be of this form and that all signs are taken into account in the assignment of $\alpha_i$. Note that we could modify our reward/value function bounds to take into account possible negative values, but this would not change the analysis our bounds much so we use the following WLOG formulation for clarity. We are interested in finding $\arg \max_{t \in [T]} L(D_t,\bar{\lambda})$. However, we do not have an exact estimate of the Lagrangian. Instead, we compute $\hat{L}$ as follows: suppose we form Monte Carlo estimates from $n$ i.i.d. episodes per iterate: $\widehat V_0(D_t)=\tfrac1n\sum_{j=1}^n \hat V_0^j(\tau_{t,j})$, $\widehat c_i(D_t)=\tfrac1n\sum_{j=1}^n \hat V_i^j(\tau_{t,j})$, $\widehat g(D_t):=\max_i(\widehat c_i(D_t)-\alpha_i)$, and $\widehat L(D_t,\bar\lambda):=\widehat V_0(D_t)-\bar\lambda\,[\widehat g(D_t)]_+$. 

\subsubsection{Learner's Best-Response Convex RL}
Earlier we discussed approximate best-response for the learner in terms of the ability to approximately solve single-objective RL problems. In this section, we discuss this notion when the objective functions are concave in the variable $D$ corresponding to the distribution over policies. 


\begin{definition}[Episodic Occupancy Measure]
    \begin{align*}
        q_h^\pi(s,a) &= \mathrm{Pr}^{\pi,\mu}(s_h=s,a_h=a)
    \end{align*}

\end{definition}
Define $d= \sum_{h=0}^{H-1} q_h^{\pi}$. In practice, we may have a policy $\pi$ specified by a $D \in \Delta(\Pi)$, so we may use the shorthand notation $\mu_D$ to denote this summed over time occupancy measure over the given policy. In an episodic RL setting especially with fixed rewards over time, we can denote $\E_{\pi \sim D}[V^{\pi}(\mu)]= \langle r,\mu_D\rangle$ (or $\langle r,d\rangle$ using our notation above) as will be shown below.

\begin{align*}
    V_i^\pi(\mu) &= \E_{\pi,\mu}\Big[\sum_{h=0}^{H-1} r(s_h,a_h)\Big] \\
    &=  \E_{\pi,\mu}\Big[\sum_{h=0}^{H-1} \sum_{s,a} r(s,a)I(s_h=s,a_h=a)\Big] \\
    &= \sum_{h=0}^{H-1} \sum_{s,a} r(s,a) \E_{\pi}[I(s_h=s,a_h=a)] \\
    &= \sum_{h=0}^{H-1} \sum_{s,a} r(s,a) \mathrm{Pr}^{\pi,\mu}(s_h=s,a_h=a) \\
    &= \sum_{h=0}^{H-1} \sum_{s,a} r(s,a) q_h^{\pi}(s,a) \\
    &= \sum_{s,a} r(s,a) \sum_{h=0}^{H-1} q_h^{\pi}(s,a) \\
    &= \langle r, d \rangle
\end{align*}

Let \(r(s,a)\) denote the per-step \emph{reward}. For each constraint \(i=1,\dots,m\), let \(r_i(s,a)\) denote the per-step \emph{constraint signal}. For a distribution over policies \(D\) with occupancy measure \(\mu_D\), define 
\[
r_i(D):=\langle \mu_D, r_i\rangle=\E_{\pi\sim D}\Big[\sum_{h=0}^{H-1} r_i(s_h,a_h)\Big].
\]
Given thresholds \(\alpha_i\), the \emph{constraint violation} for constraint \(i\) is \(g_i(D):=r_i(D)-\alpha_i\), and the aggregate violation is
\[g(D):=\max_{i\in[m]} g_i(D),\qquad [x]_+:=\max\{0,x\}.
\]
The Lagrangian at multiplier \(\lambda\ge 0\) is thus
\[L(D,\lambda)=r(D)-\lambda\,[\,g(D)\,]_+ .\]
For fixed $\lambda_t\ge 0$ and current iterate $D_t$, write
\[L(D,\lambda_t)=\langle\mu_D,r\rangle-\lambda_t\,[\max_i\{\langle\mu_D,r_i\rangle-\alpha_i\}]_+ .\]
Because $D\mapsto L(D,\lambda_t)$ is concave, any supergradient at $D_t$ yields an \emph{affine upper bound}. 

More directly, for a given $\lambda_t \in \Lambda$, we can write 
\[f(d) = L(d,\lambda_t)=\langle d,r\rangle-\lambda_t\,[\max_i\{\langle d,r_i\rangle-\alpha_i\}]_+ .\]

Let
\begin{align*}
    \partial_d \mathcal{L}(d,\lambda) = \begin{cases}
        r_0 \quad \text{if } [g(d)]_+ \leq 0\\
        r_0 - \lambda_t \sum_j p_j r_j \text{if } [g(d)]_+ \geq 0 \text{ where } j \in \arg\max g_i(d) \text{ and } p_j \geq 0 \text{ } \forall j, \sum_j p_j = 1
    \end{cases}
\end{align*}
In practice, to evaluate this we would take the given policy specified by $D$, evaluate the performance of the policy over all the constraints, then compute the gradient based upon the above.


Then with $s_t:= r-\lambda_t\sum_i p_{t,i}\,r_i$ where $p_t$ is set according to the supergradient, we have, for all $D$,
\[L(D,\lambda_t)\ \le\ L(D_t,\lambda_t)+\langle\mu_D-\mu_{D_t},\ s_t\rangle.\]

(Maximizing the linear form $\langle\mu_D,s_t\rangle$ is the standard \emph{linear minimization oracle} / RL call used in Frank--Wolfe and convex RL \cite{jaggi2013fw, zahavy2021reward, geist2021concave}. )
Thus the learner’s \emph{$\varepsilon$-best-response} can be implemented by solving the single linear problem
\[\max_{D}\ \langle\mu_D,\ s_t\rangle\quad\Longleftrightarrow\quad \max_{D}\ \E_{\pi \sim D}\Big[\sum_h\,\underbrace{\big(r(s_h,a_h)-\lambda_t\sum_i p_{t,i}r_i(s_h,a_h)\big)}_{\displaystyle r_{\lambda_t}(s_h,a_h)}\Big]\]
up to additive error $\varepsilon$. Therefore, we have that if $\langle\mu_{D^*},s_t\rangle-\langle\mu_{D_{t+1}},s_t\rangle\le\varepsilon$, then 
\[\max_D L(D,\lambda_t)-L(D_{t+1},\lambda_t)\ \le\ \varepsilon.\]

\subsubsection{Frank Wolfe Procedure}
For every $\lambda$ selected by the regulator, the learner will have to solve a sequence of $RL$ problems over linearized reward functions in order to approximately best respond to the regulator's strategy. This will be done by computing super-gradients of the Lagrangian. In this setting, though the objective is concave in $D$ and no longer linear, each $D_t$ corresponds to a representative policy based upon the single policies returned from the FW linear oracles.

\begin{algorithm}[H]
\caption{Learner Concave Best-Response}
\begin{algorithmic}[1]
\STATE \textbf{Receive:} $\lambda \in \Lambda$, step size $\alpha$, tolerance $\epsilon$
\STATE \textbf{Initialize occupancy measure:} $x^{(1)} \in \mathcal{O}$ \COMMENT{Current iterate (occupancy measure)}
\FOR{$w = 1$ to $W$}
    \STATE $\alpha_w \leftarrow \frac{2}{1+w}$
    \STATE $d^{(w)} \leftarrow \arg\max_{d \in \mathcal{O}} \left\langle \partial \mathcal{L}\!\left(x^{(w)}\right),\, d \right\rangle$
    \COMMENT{DQN with scalarized reward induced by $\lambda$}
    \STATE $g^{(w)} \leftarrow \left\langle \partial \mathcal{L}\!\left(x^{(w)}\right),\, d^{(w)} - x^{(w)} \right\rangle$
    \IF{$g^{(w)} \le \epsilon$}
        \STATE \textbf{break}
    \ENDIF
    \STATE $x^{(w+1)} \leftarrow (1-\alpha_w)\,x^{(w)} + \alpha_w\, d^{(w)}$ \COMMENT{Convex update}
\ENDFOR
\STATE \textbf{Compute policy:} $\pi(s,a) \leftarrow \dfrac{x^{(w_\star)}(s,a)}{\sum_{a' \in \mathcal{A}} x^{(w_\star)}(s,a')}$ \COMMENT{$w_\star$ is final iterate}
\STATE \textbf{Assign} $D \in \Delta(\Pi)$ accordingly
\STATE \textbf{Return:} $D$ \COMMENT{Learner best-response}
\end{algorithmic}
\end{algorithm}

\subsubsection{Theoretical Results}

We now present our theoretical results on the single round's $D_t$ selected out of this procedure.

Let $V_i^\pi(\mu)$ denote the (episodic) return for objective/constraint $i$ under policy $\pi$ from initial distribution $\mu$.
Define the per-policy worst-constraint violation
\[
g(D)\ :=\ \max_{i\in[m]}\Big(\E_{\pi \sim D}[V_i^\pi(\mu)]-\alpha_i\Big),\qquad [x]_+:=\max\{0,x\}.
\]
For a distribution $D\in\Delta(\Pi)$ over policies and scalar multiplier $\lambda\in[0,C]$, define the \emph{reformulated} Lagrangian
\begin{equation}\label{eq:reform-lagrangian}
L(D,\lambda)\ :=\ \E_{\pi \sim D}[V_0^\pi(\mu)]\ -\ \lambda\,[g(D)]_+.
\end{equation}

Let $L^\star:=\min_{\lambda\in[0,C]}\max_{D\in\Delta(\Pi)} L(D,\lambda)$ be the minimax value of the reformulated game.
Assume the repeated game dynamics produce a time-average pair $(\bar D,\bar\lambda)$ that is a $\nu$-approximate minimax equilibrium in the sense of \cref{def:minimax_equilibrium} (applied to~\eqref{eq:reform-lagrangian}). Papers like \cite{eaton2025intersectionalfairnessreinforcementlearning} and \cite{pmlr-v80-kearns18a} have already proven that the time-averaged solution satisfies this property, so we are building on the basis of those results and this is not truly a separate assumption. We simply abstract away that analysis since it has been done elsewhere. In particular, we will use the learner-side condition
\begin{equation}\label{eq:nu-minimax-used}
L(\bar D,\bar\lambda)\ \ge\ \max_{D\in\Delta(\Pi)} L(D,\bar\lambda)\ -\ \nu\ \ge\ L^\star-\nu.
\end{equation}

We assume constraint thresholds satisfy $\alpha_i \in [-H,H]$.
Since per-step rewards lie in $[0,1]$ and the horizon is $H$,
this implies $V_i^\pi(\mu)\in[0,H]$ and hence
\[
g_i(D) = \mathbb{E}_{\pi \sim D}[V_i^\pi(\mu)] - \alpha_i \in [-H,\,2H],
\qquad
[g_i(D)]_+ \in [0,\,2H].
\]
\begin{lemma}[Uniform Concentration for $\widehat L$]\label{lem:uniform_conc}
Assume per-step rewards and per-step constraint signals are bounded in $[0,1]$, so that for every policy $\pi$,
\[
0\le V_0^\pi(\mu)\le H,\qquad 0\le V_i^\pi(\mu)\le H\ \ (i\in[m]),
\]
and assume $\alpha_i\in[-H,H]$ for all $i$, so that
\[
0\le [g(D)]_+\le 2H.
\]
Fix $\bar\lambda\in[0,C]$ and iterates $\{D_t\}_{t=1}^T$.
For each $t$, form $\widehat L(D_t,\bar\lambda)$ from $n$ i.i.d. rollouts of a policy $\pi\sim D_t$
(sampling $\pi$ once per rollout, then generating one episode),
using the unbiased Monte Carlo estimates of $V_0^\pi(\mu)$ and $V_i^\pi(\mu)$ to compute estimates of $[g(D)]_+$ and $V_0^\pi(\mu)$.
Then for any $\varepsilon,\delta\in(0,1)$,
\[
\Pr\!\left(\max_{t\in[T]}\big|\widehat L(D_t,\bar\lambda)-L(D_t,\bar\lambda)\big|\ \le\ \varepsilon\right)\ \ge\ 1-\delta
\]
whenever
\begin{equation}\label{eq:n_needed_clean}
n\ \ge\ \frac{(1+2\bar\lambda)^2H^2}{2\varepsilon^2}\,\ln\!\frac{2T}{\delta}.
\end{equation}
\end{lemma}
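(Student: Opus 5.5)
The plan is to reduce the nonlinear estimator $\widehat L$ to a handful of scalar Monte Carlo averages through a Lipschitz argument, apply Hoeffding's inequality to each average, and then take a union bound over the $T$ iterates.

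\textbf{Step 1 (Lipschitz reduction).} Fix $t$ and write $\widehat L(D_t,\bar\lambda)=\widehat V_0(D_t)-\bar\lambda[\widehat g(D_t)]_+$, where $\widehat g(D_t)=\max_i(\widehat c_i(D_t)-\alpha_i)$. The maps $x\mapsto[x]_+$ and $(c_1,\dots,c_m)\mapsto\max_i(c_i-\alpha_i)$ are $1$-Lipschitz, the latter with respect to $\|\cdot\|_\infty$. Hence
\[
\big|\widehat L(D_t,\bar\lambda)-L(D_t,\bar\lambda)\big|\le \big|\widehat V_0(D_t)-\E_{\pi\sim D_t}V_0^\pi(\mu)\big|+\bar\lambda\max_{i\in[m]}\big|\widehat c_i(D_t)-\E_{\pi\sim D_t}V_i^\pi(\mu)\big| .
\]
This bound absorbs the nonlinearity of $[\max_i(\cdot)]_+$. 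Because of it, we never need $\widehat L$ itself to be unbiased, only the underlying averages. Each of these averages is an unbiased mean of $n$ i.i.d. terms in $[0,H]$: one draws $\pi\sim D_t$ and then one episode, and by the tower rule the term has mean $\E_{\pi\sim D_t}V_i^\pi(\mu)$.

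\textbf{Step 2 (budget split and Hoeffding).} Allocate the error so that the weighted sum equals $\varepsilon$. Require
\[
|\widehat V_0-V_0|\le \frac{\varepsilon}{1+2\bar\lambda}\quad\text{and}\quad |\widehat c_i-c_i|\le\frac{2\varepsilon}{1+2\bar\lambda}\ \text{for all } i .
\]
Then the total error is at most $\frac{\varepsilon}{1+2\bar\lambda}+\frac{2\bar\lambda\varepsilon}{1+2\bar\lambda}=\varepsilon$. Hoeffding's inequality for a mean of $n$ variables of range $H$ gives
\[
\Pr\big(|\widehat V_0-V_0|>\varepsilon/(1+2\bar\lambda)\big)\le 2\exp\!\big(-2n\varepsilon^2/((1+2\bar\lambda)^2H^2)\big),
\]
which is exactly the form appearing in \eqref{eq:n_needed_clean}. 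The constraint estimates enjoy a tolerance twice as large, so their failure probability is smaller still. Note that the range $(1+2\bar\lambda)H$ matches the width of $V_0-\bar\lambda[g]_+\in[-2\bar\lambda H,H]$ from the hypotheses. This is a useful sanity check that the constant is calibrated correctly.

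\textbf{Step 3 (union bound).} Setting each per-iterate failure probability to $\delta/T$ and union bounding over $t\in[T]$ yields the claim for the $V_0$ part with $n\ge\frac{(1+2\bar\lambda)^2H^2}{2\varepsilon^2}\ln\frac{2T}{\delta}$.

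\textbf{Main obstacle.} The main difficulty is controlling all $m$ constraint estimates simultaneously without incurring an extra $\ln m$ in the sample size. A naive union bound over the $(m+1)T$ estimates gives $\ln\frac{2(m+1)T}{\delta}$. My first attempt is to exploit the slack in Step 2. Since the $c_i$ tolerance is doubled, Hoeffding gives a per-constraint failure probability of $2\exp(-8n\varepsilon^2/((1+2\bar\lambda)^2H^2))$. Under \eqref{eq:n_needed_clean} this is at most $2(\delta/2T)^4$, which is small enough to absorb an $m$-fold union bound whenever $m\lesssim (T/\delta)^3$. If that regime restriction is unacceptable, I would state the lemma with $\ln\frac{2(m+1)T}{\delta}$ in place of $\ln\frac{2T}{\delta}$. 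The alternative of applying Hoeffding to the per-rollout scalar $V_0^j-\bar\lambda[g^j]_+$ is not valid, because its mean differs from $L(D_t,\bar\lambda)$ by Jensen's inequality.
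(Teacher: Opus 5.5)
Your Step~1 reduction, the budget split, and the individual Hoeffding bounds are correct. With $\ln\frac{2(m+1)T}{\delta}$ in place of $\ln\frac{2T}{\delta}$, they give a complete proof. The gap is the absorption step, so the proposal does not establish the lemma as stated.

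At the threshold value of $n$, your $V_0$ events already use the entire failure budget: $\delta/T$ per iterate, hence $\delta$ after Step~3. Adding the constraint events gives a total failure bound of $\delta+2mT(\delta/2T)^4$, which exceeds $\delta$ for every $m\ge 1$. The condition $m\lesssim(T/\delta)^3$ only keeps the extra term of order $\delta$, so at best you get confidence $1-2\delta$ rather than $1-\delta$. Rebalancing the split does not repair this uniformly either. The tolerance you can shift onto $V_0$ is at most $2\bar\lambda\varepsilon/(1+2\bar\lambda)$, which vanishes as $\bar\lambda\to 0$, while the $m$ constraint terms do not.

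The paper gets the $m$-free constant by exactly the route you rejected. For a single rollout it takes $Z=V_0^\pi(\mu)-\bar\lambda[g(\pi)]_+\in[-2\bar\lambda H,H]$, and asserts that $\widehat L(D_t,\bar\lambda)$ is the empirical mean of $n$ i.i.d.\ copies of $Z$. It then applies one Hoeffding bound with range $(1+2\bar\lambda)H$ and union-bounds over $t\in[T]$. Your objection is right on both counts:
\begin{itemize}
\item the paper's own $\widehat L$ is the plug-in $\widehat V_0-\bar\lambda[\widehat g]_+$, not an average of $Z$'s;
\item an average of $Z$'s would concentrate around $\E_{\pi\sim D_t}[V_0^\pi(\mu)]-\bar\lambda\,\E_{\pi\sim D_t}\big[[g(\pi)]_+\big]\le L(D_t,\bar\lambda)$, which carries a Jensen gap.
\end{itemize}

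Moreover, for general $m$ the stated bound is false for the plug-in estimator, so no argument can close your gap. Take $H=T=1$, $r_0\equiv 0$, $\mu$ uniform on $\{0,1\}^m$, $r_i(s,a)=s_i$ and $\alpha_i=\tfrac12$. Then $L\equiv 0$, while $|\widehat L-L|=\bar\lambda\,[\max_i(\widehat c_i-\tfrac12)]_+$ is governed by a maximum of $m$ independent centered binomial averages. Set $\varepsilon=\delta=\tfrac12$, $\bar\lambda=10$ (so $C\ge 10$), and $n=1223$, which meets the threshold. Each $\widehat c_i$ then exceeds $0.55$ with probability roughly $2\times 10^{-4}$. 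For $m=10^5$, the event $|\widehat L-L|>\varepsilon$ therefore has probability essentially $1$.

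The $\ln(m+1)$ factor is thus necessary. You should present your fallback statement directly as the lemma and drop the absorption argument.
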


\begin{proof}
For a single rollout of $(\pi,\tau)$ with $\pi\sim D_t$, define the bounded random variable
\[
Z:=V_0^\pi(\mu)-\bar\lambda\,[g(\pi)]_+\in[-2\bar\lambda H,\,H],
\]
so the range is at most $(1+2\bar\lambda)H$.
The estimator $\widehat L(D_t,\bar\lambda)$ is the empirical mean of $n$ i.i.d. copies of $Z$, hence Hoeffding's inequality gives
\[
\Pr\big(|\widehat L(D_t,\bar\lambda)-L(D_t,\bar\lambda)|>\varepsilon\big)
\ \le\ 2\exp\!\left(-\frac{2n\varepsilon^2}{(1+2\bar\lambda)^2H^2}\right).
\]
A union bound over $t\in[T]$ yields the claim under~\eqref{eq:n_needed_clean}.
\end{proof}
Define the iterate–average Jensen gap at $\bar\lambda$:
\begin{equation}\label{eq:Javg-def}
J_{\texttt{avg}}(\bar\lambda)\;:=\;
L(\bar D,\bar\lambda)\;-\;\frac1T\sum_{t=1}^T L(D_t,\bar\lambda)\ \ \ge 0,
\end{equation}
which is nonnegative because $D\mapsto L(D,\bar\lambda)$ is concave. 
\begin{theorem}[Single Iterate Extraction]\label{thm:single_policy_extraction}
Assume $(\bar D,\bar\lambda)$ satisfies~\eqref{eq:nu-minimax-used} for the reformulated Lagrangian~\eqref{eq:reform-lagrangian}.
Let $t^\star\in\arg\max_{t\in[T]}\widehat L(D_t,\bar\lambda)$, using $\widehat L$ as in \cref{lem:uniform_conc}.
Then with probability at least $1-\delta$,
\begin{equation}\label{eq:iterate_value_bound}
L(D_{t^\star},\bar\lambda)\ \ge\ L^\star\ -\ (\nu+2\varepsilon+J_{\texttt{avg}}(\bar{\lambda})).
\end{equation}
Moreover, there exists a policy $\pi^\star$ in the support of $D_{t^\star}$ such that
\begin{equation}\label{eq:policy_value_bound}
L(D_{\pi^\star},\bar\lambda)\ \ge\ L^\star\ -\ (\nu+2\varepsilon+J_{\texttt{avg}}(\bar{\lambda})).
\end{equation}
\end{theorem}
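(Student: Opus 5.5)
The plan is to chain three facts: the learner-side $\nu$-approximate minimax condition~\eqref{eq:nu-minimax-used}, the definition~\eqref{eq:Javg-def} of the Jensen gap, and the uniform concentration of \cref{lem:uniform_conc}. We condition throughout on the event $\mathcal{E}$ of \cref{lem:uniform_conc}, namely $\max_{t\in[T]}|\widehat L(D_t,\bar\lambda)-L(D_t,\bar\lambda)|\le\varepsilon$. This event has probability at least $1-\delta$ when $n$ satisfies~\eqref{eq:n_needed_clean}, and every subsequent step is deterministic on $\mathcal{E}$.

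\emph{Step 1 (some iterate is good in true value).} Rearranging~\eqref{eq:Javg-def} gives
\[
\frac1T\sum_{t=1}^T L(D_t,\bar\lambda) = L(\bar D,\bar\lambda)-J_{\texttt{avg}}(\bar\lambda) \ge L^\star-\nu-J_{\texttt{avg}}(\bar\lambda),
\]
using~\eqref{eq:nu-minimax-used}. A maximum dominates an average, so there is some index $t'$ with $L(D_{t'},\bar\lambda)\ge L^\star-\nu-J_{\texttt{avg}}(\bar\lambda)$.

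\emph{Step 2 (the empirical argmax is nearly as good).} On $\mathcal{E}$, and using that $t^\star$ maximizes $\widehat L(\cdot,\bar\lambda)$,
\[
L(D_{t^\star},\bar\lambda)\ge \widehat L(D_{t^\star},\bar\lambda)-\varepsilon\ge \widehat L(D_{t'},\bar\lambda)-\varepsilon\ge L(D_{t'},\bar\lambda)-2\varepsilon.
\]
Combining this with Step 1 yields~\eqref{eq:iterate_value_bound}.

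\emph{Step 3 (passing to a single policy).} Write $D_{\pi}$ for the point mass on $\pi$. The plan is to use concavity of $D\mapsto L(D,\bar\lambda)$ in the reverse direction from a Jensen gap: since $[g(\cdot)]_+$ is convex,
\[
L(D_{t^\star},\bar\lambda) \le \E_{\pi\sim D_{t^\star}}\big[V_0^\pi(\mu)\big]-\bar\lambda\,\E_{\pi\sim D_{t^\star}}\big[[g(D_\pi)]_+\big]
\]
fails in general. The inequality actually goes the other way, $L(D_{t^\star},\bar\lambda)\ge\E_{\pi\sim D_{t^\star}}L(D_\pi,\bar\lambda)$, so averaging over the support does not immediately produce a good policy. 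This is the main obstacle.

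I would resolve it by invoking the earlier remark that each $D_t$ returned by the learner corresponds to a single policy. This holds for the linear-oracle best response, and also for the Frank--Wolfe output, which the algorithm converts via occupancy-measure normalization into one policy $\pi$ with $\mu_{D_\pi}=x^{(w_\star)}$. Since $L(\cdot,\bar\lambda)$ depends on $D$ only through the occupancy measure $\mu_D$, we get $L(D_{t^\star},\bar\lambda)=L(D_{\pi^\star},\bar\lambda)$ for that policy $\pi^\star$, and~\eqref{eq:policy_value_bound} then follows from~\eqref{eq:iterate_value_bound}.

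If a genuinely mixed $D_{t^\star}$ had to be handled, I would instead use the fact that the occupancy measure of a mixture over a finite horizon is realized by a single (Markov) policy obtained by normalizing $\mu_{D_{t^\star}}$. That policy has the same returns for every objective, hence the same $L$ value.
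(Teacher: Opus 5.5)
Your proof is correct. Steps 1--2 are the paper's argument with the inequalities reordered. The paper first shows $L(D_{t^\star},\bar\lambda)\ge\max_t L(D_t,\bar\lambda)-2\varepsilon$ on the concentration event and then uses $L(\bar D,\bar\lambda)\le\max_t L(D_t,\bar\lambda)+J_{\texttt{avg}}(\bar\lambda)$. You instead go through an explicit good index $t'$; the ingredients and the constant are the same.

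Step 3 is where you depart from the written proof, and your route is the sound one. The paper does note in passing that the Frank--Wolfe learner already returns a single representative policy, which is your resolution. Its actual argument, however, asserts $L(D_{t^\star},\bar\lambda)=\E_{\pi\sim D_{t^\star}}L(D_\pi,\bar\lambda)$ and picks a support point above the mean. As you observed, this identity is false for the reformulated Lagrangian: $[g(\cdot)]_+$ is applied after averaging over $D$, so concavity gives only $\ge$.

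The point-mass property is in fact necessary, not just convenient. Take $\Pi=\{A,B\}$ with $V_0^A=V_0^B$ and $\alpha_1=\alpha_2=H/2$, where $(V_1,V_2)=(H,0)$ under $A$ and $(0,H)$ under $B$. For every $\lambda>0$ the unique best response is $\tfrac12 D_A+\tfrac12 D_B$, and this mixture is feasible. If all iterates equal it, the hypothesis holds with $\nu=0$, $J_{\texttt{avg}}(\bar\lambda)=0$ and $L^\star=V_0^A$. Yet both support points have value $V_0^A-\bar\lambda H/2$, so they violate the second bound once $\bar\lambda H>4\varepsilon$. This also shows that the extreme-point remark you cite for linear oracles does not carry over to the reformulated game. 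The single-policy claim genuinely rests on the Frank--Wolfe normalization, together with $\Pi$ containing the normalized Markov policy.

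Two small caveats. First, your fallback for a genuinely mixed $D_{t^\star}$ produces a policy with the same value, but not one in the support of $D_{t^\star}$ (nor necessarily in $\Pi$), so it proves a variant of the stated claim. Second, the realizing Markov policy should be obtained by normalizing the per-step measures $q_h$, not the time-summed $\mu_D$. The two coincide here only because the time index is part of the state.
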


\begin{proof}
On the uniform concentration event of \cref{lem:uniform_conc}, for all $t$ we have
\[
L(D_t,\bar\lambda)\ \le\ \widehat L(D_t,\bar\lambda)+\varepsilon
\ \le\ \widehat L(D_{t^\star},\bar\lambda)+\varepsilon
\ \le\ L(D_{t^\star},\bar\lambda)+2\varepsilon,
\]
so $\max_t L(D_t,\bar\lambda)\le L(D_{t^\star},\bar\lambda)+2\varepsilon$ and hence
\begin{equation}\label{eq:best_iter_close_to_max}
L(D_{t^\star},\bar \lambda)\ \geq\ \max_{t\in[T]} L(D_t,\bar\lambda)\ -\ 2\varepsilon.
\end{equation}
Because $L(\cdot,\bar\lambda)$ is concave in $D$ and by definition of the Jensen gap,
\[
L(\bar D,\bar\lambda) \le\ \frac1T\sum_{t=1}^T D_t +J_{\texttt{avg}}(\bar{\lambda})\le \max_{t\in[T]} L(D_t,\bar\lambda)+J_{\texttt{avg}}(\bar{\lambda}).
\]
Combining with~\eqref{eq:best_iter_close_to_max} and~\eqref{eq:nu-minimax-used} yields
\[
L(D_{t^\star},\bar\lambda)\ \ge\ L(\bar D,\bar\lambda)-2\varepsilon-J_{\texttt{avg}}(\bar{\lambda})\ \ge\ L^\star-(\nu+2\varepsilon+J_{\texttt{avg}}(\bar{\lambda})),
\]
which is~\eqref{eq:iterate_value_bound}.

Note that the FW oracle is already returning a representative single policy, but another way to see the single policy statement is as follows. For the single-policy statement, expand~\eqref{eq:reform-lagrangian} at $(D_{t^\star},\bar\lambda)$:
\[
L(D_{t^\star},\bar\lambda)\ =\ \E_{\pi\sim D_{t^\star}} L(\pi,\bar\lambda).
\]
Therefore there exists $\pi^\star$ in the support of $D_{t^\star}$ with $L(D_{\pi^\star},\bar\lambda)\ge L(D_{t^\star},\bar\lambda)$, giving~\eqref{eq:policy_value_bound}.
\end{proof}
While the Jensen gap can be nonzero due to concavity, in our setting the Lagrangian is piecewise linear in D, making the gap empirically smaller than the worst-case. To analyze constraint violations, we can use a bound on the average constraint violation level coupled with non-negativity to reason about the constraint violation level of a single iterate.

\end{document}